\newcommand{\BlackBox}{\rule{1.5ex}{1.5ex}}  
\newtheorem*{assumption*}{Assumption}
\newtheorem{thm}{Theorem}
\newtheorem{lemma}{Lemma} 
\newtheorem{proposition}{Proposition}
\def\ones{\mathds{1}}
\def\bbE{\mathbb{E}}
\def\bbP{\mathbb{P}}
\def\bbR{\mathbb{R}}
\def\bbN{\mathbb{N}}
\def\cG{\mathcal{G}}
\def\cE{\mathcal{E}}
\def\cP{\mathcal{P}}
\def\cR{\mathcal{R}}
\def\cR{\mathcal{R}}
\newcommand{\defeq}{\mbox{$\;\stackrel{\mbox{\tiny\rm def}}{=}\;$}}
\newcommand{\leqa}{\mbox{$\;\stackrel{\mbox{\tiny\rm (a)}}{\leq}\;$}}
\newcommand{\leqb}{\mbox{$\;\stackrel{\mbox{\tiny\rm (b)}}{\leq}\;$}}
\newcommand{\leqc}{\mbox{$\;\stackrel{\mbox{\tiny\rm (c)}}{\leq}\;$}}
\newcommand{\leqd}{\mbox{$\;\stackrel{\mbox{\tiny\rm (d)}}{\leq}\;$}}
\DeclareMathOperator{\sgn}{sgn}
\DeclarePairedDelimiter\abs{\lvert}{\rvert}%
\DeclarePairedDelimiter\norm{\lVert}{\rVert}%
\let\oldabs\abs
\def\abs{\@ifstar{\oldabs}{\oldabs*}}
\let\oldnorm\norm
\def\norm{\@ifstar{\oldnorm}{\oldnorm*}}
\newcommand{\Zhat}{\hat{Z}}
\newcommand{\cL}{\mathcal{L}}
\newcommand{\siom}{|\Omega|}
\newcommand{\Deltaom}{\Delta_{\Omega}}
\newcommand{\Xom}{X_{\Omega}}
\newcommand{\Zhatom}{\hat{Z}_{\Omega}}
\newcommand{\ghat}{\hat{g}}
\newcommand{\Mhat}{\hat{M}}
\newcommand{\rank}{rank}
\newcommand{\Mhatij}{\hat{M}_{i,j}}
\newcommand{\Zhatij}{\hat{Z}_{i,j}}
\newcommand{\Bhat}{\hat{B}}
\newcommand{\Mij}{M_{i,j}}
\newcommand{\Mstar}{M^{\star}}
\newcommand{\Zstar}{Z^{\star}}
\newcommand{\Mstarij}{M^{\star}_{i,j}}
\newcommand{\Zstarij}{Z^{\star}_{i,j}}
\newcommand{\Xstar}{X^\star}
\newcommand{\Xij}{X_{i,j}}
\newcommand{\gstar}{g^{\star}}
\newcommand{\mmc}{\text{MMC}}
\newcommand{\LPAV}{LPAV}
\newcommand{\MSE}{MSE}
\newcommand{\cN}{\mathcal{N}}
\newcommand{\limco}{\text{MMC}}
\title{Matrix Completion Under Monotonic Single Index Models }
\author{
Ravi Ganti \\
Wisconsin Institutes for Discovery\\
UW-Madison\\
\texttt{gantimahapat@wisc.edu} \\
\And
Laura Balzano \\
Electrical Engineering and Computer Sciences \\
University of Michigan Ann Arbor \\
\texttt{girasole@umich.edu} \\
\AND
Rebecca Willett\\
Department of Electrical and Computer Engineering\\
UW-Madison\\
\texttt{rmwillett@wisc.edu} 
}
\begin{document}
\maketitle
\begin{abstract}
Most recent results in matrix completion assume that the matrix under consideration is low-rank or that the columns are in a union of low-rank subspaces. In real-world settings, however, the linear structure underlying these models is distorted by a (typically unknown) nonlinear transformation. This paper addresses the challenge of matrix completion in the face of such nonlinearities. Given a few observations of a matrix that are obtained by applying a Lipschitz, monotonic function to a low rank matrix, our task is to estimate the remaining unobserved entries. We propose a novel matrix completion method that alternates between low-rank matrix estimation and monotonic function estimation to estimate the missing matrix elements. Mean squared error bounds provide insight into how well the matrix can be estimated based on the size,  rank of the matrix and properties of the nonlinear transformation. Empirical results on synthetic and real-world datasets demonstrate the competitiveness of the proposed approach.  
\end{abstract}
\section{Introduction}
In matrix completion, one has access to a matrix with only a few observed entries, and the task is to estimate the entire matrix using the observed entries. This problem has a plethora of applications such as collaborative filtering, recommender systems~\cite{melville2010recommender} and sensor networks~\cite{cucuringu2012graph}. Matrix completion has been well studied in machine learning, and we now know how to recover certain matrices given a few observed entries of the matrix~\cite{recht2011simpler,candes2009exact} when it is assumed to be low rank. Typical work in matrix completion assumes that the matrix to be recovered is incoherent, low rank, and entries are sampled uniformly at random~\cite{candes2010matrix,negahban2012restricted,candes2009exact,recht2011simpler,keshavan2010matrix,gross2011recovering}. While recent work has focused on relaxing the incoherence and sampling conditions under which matrix completion succeeds, there has been little work for matrix completion when the underlying matrix is of high rank. More specifically, we shall assume that the matrix that we need to complete is obtained by applying some unknown, non-linear function to each element of an unknown low-rank matrix. Because of the application of a non-linear transformation, the resulting ratings matrix tends to have a large rank. To understand the effect of the application of non-linear transformation on a low-rank matrix, we shall consider the following simple experiment:  Given an $n\times m$ matrix $X$, let $X=\sum_{i=1}^m \sigma_i u_i v_i^\top$ be its SVD. The rank of the matrix $X$ is the number of non-zero singular values. Given an $\epsilon\in (0,1)$, define the effective rank of $X$ as follows:
\begin{equation}
\label{eqn:eff_rank}
r_{\epsilon}(X)=\min\left\{k\in\bbN:\sqrt{\frac{\sum_{j=k+1} ^m \sigma_j^2}{\sum_{j=1}^m  \sigma_j^2}}\leq \epsilon\right\}.
\end{equation}
The effective rank of $X$ tells us the rank $k$ of the lowest rank approximator $\hat{X}$ that satisfies
\begin{equation}
\frac{||\hat{X}-X||_F}{||X||_F}\leq \epsilon.
\end{equation}
\begin{figure}[t]
  \centering
  \includegraphics[scale=0.25]{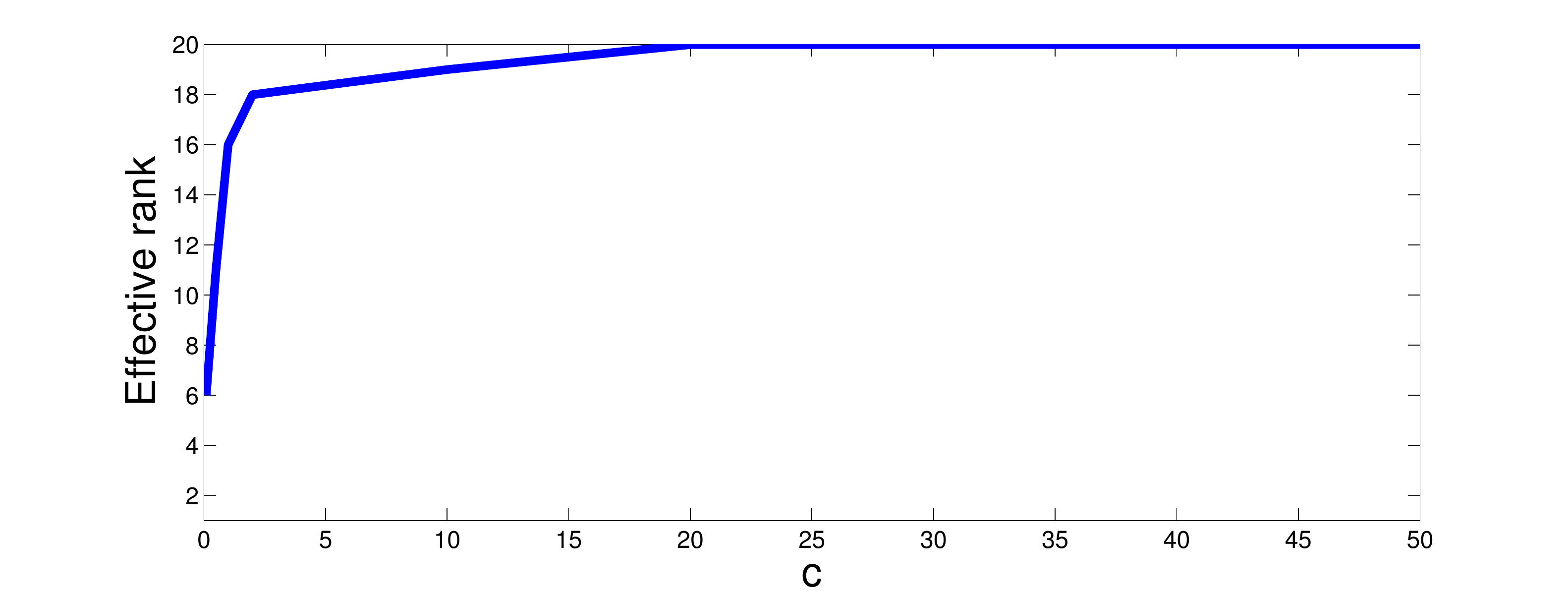}
\caption{The plot shows the $r_{0.01}(X)$ defined in equation~\eqref{eqn:eff_rank} obtained by applying a non-linear function $\gstar$ to each element of $Z$, where $\gstar(z)=\frac{1}{1+\exp(-cz)}$. $Z$ is a $30\times 20$ matrix of rank $5$.\label{fig:hrank}}
\end{figure}
In figure~\eqref{fig:hrank}, we show the effect of applying a non-linear monotonic function $\gstar(z)=\frac{1}{1+\exp(-cz)}$ to the elements of a low-rank matrix $Z$. As $c$ increases both the rank of $X$ and its effective rank $r_{\epsilon}(X)$ grow rapidly with $c$, rendering traditional matrix completion methods ineffective even in the presence of mild nonlinearities.
\subsection{Our Model and contributions}
\label{sec:model}
In this paper we consider the high-rank matrix completion problem where the data generating process is as follows:
There is some unknown matrix $\Zstar\in \bbR^{n\times m}$ with $m\leq n$ and of rank $r\ll m$. A non-linear, monotonic, $L$- Lipschitz function $\gstar$ is applied to each element of the matrix $\Zstar$ to get another matrix $\Mstar$. A noisy version of $\Mstar$, which we call $X$, is observed on a subset of indices denoted by  $\Omega \subset [n]\times [m]$. 
\begin{align}
\Mstarij&=\gstar(\Zstarij),~\forall i\in [n], j\in [m] \label{eqn:model2}\\
X_{\Omega}&=(\Mstar+N)_{\Omega}\label{eqn:model1}
\end{align}
 The function $\gstar$ is called the transfer function. We shall assume that $\bbE[N]=0$, and the entries of $N$ are i.i.d. We shall also assume that the index set $\Omega$ is generated uniformly at random with replacement from the set $[n]\times [m]$~\footnote{By $[n]$ we denote the set $\{1,2\ldots,n\}$}. Our task is to reliably estimate the entire matrix $\Mstar$ given observations of $X$ on $\Omega$. We shall call the above model as Monotonic Matrix Completion (MMC). To illustrate our framework we shall consider the following two simple examples. In recommender systems users are required to provide discrete ratings of various objects. For example, in the Netflix problem users are required to  rate movies on a scale of $1-5$~\footnote{This is typical of many other recommender engines such as Pandora.com, Last.fm and Amazon.com.}. These discrete scores can be thought of as obtained by applying a rounding function to some ideal real valued score matrix given by the users. This real-valued score matrix may be well modeled by a low-rank matrix, but the application of the rounding function~\footnote{Technically the rounding function is not a Lipschitz function but can be well approximated by a Lipschitz function.} increases the rank of the original low-rank matrix. Another important example is that of completion of Gaussian kernel matrices. Gaussian kernel matrices are used in kernel based learning methods. The Gaussian kernel matrix of a set of $n$ points is an $n\times n$ matrix obtained by applying the Gaussian function on an underlying Euclidean distance matrix. The Euclidean distance matrix is a low-rank matrix~\cite{dattorro2010convex}. However, in many cases one cannot measure all pair-wise distances between objects, resulting in an incomplete Euclidean distance matrix and hence an incomplete kernel matrix. Completing the kernel matrix can then be viewed as completing a matrix of large rank.

In this paper we study this matrix completion problem and provide algorithms with provable error guarantees. Our contributions are as follows:
\begin{enumerate}
\item In Section~\eqref{sec:alg} we propose an optimization formulation to estimate matrices in the above described context. In order to do this we introduce two formulations, one using a squared loss, which we call           $\mmc$ - LS, and another using a calibrated loss function, which we call as $\mmc-c$. For both these formulations we minimize w.r.t. $\Mstar$ and $\gstar$. This calibrated loss function has the property that the minimizer of the calibrated loss satisfies equation~\eqref{eqn:model2}. 
\item We propose alternating minimization algorithms to solve our optimization problem. Our proposed algorithms, called $\mmc-c$ and $\mmc$-LS, alternate between solving a quadratic program to estimate $\gstar$ and performing projected gradient descent updates to estimate the matrix $\Zstar$. $\mmc$ outputs the matrix $\Mhat$ where $\Mhat_{i,j}=\ghat(\Zhat_{i,j})$.
\item In Section~\eqref{sec:analysis} we analyze the mean squared error (MSE) of the matrix $\Mhat$ returned by one step of the $\limco-c$ algorithm. The upper bound on the MSE of the matrix $\Mhat$ output by $\limco$ depends only on the rank $r$ of the matrix $\Zstar$ and not on the rank of matrix $\Mstar$. This property makes our analysis useful because the matrix $\Mstar$ could be potentially high rank and our results imply reliable estimation of a high rank matrix with error guarantees that depend on the rank of the matrix $\Zstar$.   
\item We compare our proposed algorithms to state-of-art implementations of low rank matrix completion on both synthetic and real datasets (Section~\ref{sec:expts}). We develop an ADMM algorithm to solve the quadratic program that is used to estimate $\gstar$. We believe that the proposed ADMM algorithm is useful in its own right and can be used in general isotonic regression problems elsewhere~\cite{kalai2009isotron,kakade2011efficient}.
\end{enumerate}
\section{Related work}
\label{sec:rw}
Classical matrix completion with and without noise has been investigated by several authors~\cite{candes2010matrix,negahban2012restricted,candes2009exact,recht2011simpler,keshavan2010matrix,gross2011recovering}. The recovery techniques proposed in these papers solve a convex optimization problem that minimizes the nuclear norm of the matrix subject to convex constraints. Progress has also been made on designing efficient algorithms to solve the ensuing convex optimization problem~\cite{vandereycken2013low,tan2014riemannian,wang2014rank,wen2012solving}. Recovery techniques based on nuclear norm minimization guarantee matrix recovery under the condition that  a) the matrix is low rank, b)  the matrix is incoherent or not very spiky, and  c) the entries are observed uniformly at random.  
Literature on high rank matrix completion is relatively sparse. When columns or rows of the matrix belong to a union of subspaces, then the matrix tends to be of high rank. For such high rank matrix completion problems algorithms have been proposed that exploit the fact that multiple low-rank subspaces can be learned by clustering the columns or rows and learning subspaces from each of the clusters. While Eriksson et al.~\cite{eriksson2012high} suggested looking at the neighbourhood of each incomplete point for completion, ~\cite{yang2015sparse} used a combination of spectral clustering techniques as done in~\cite{soltanolkotabi2012geometric,elhamifar2013sparse} along with learning sparse representations via convex optimization to estimate the incomplete matrix. Singh et al.~\cite{singh2012completion} consider a certain specific class of high-rank matrices that are obtained from ultra-metrics.  In~\cite{koyejo2013retargeted} the authors consider a model similar to ours, but instead of learning a single monotonic function, they learn multiple monotonic functions, one for each row of the matrix. However, unlike in this paper, their focus is on a ranking problem and their proposed algorithms lack theoretical guarantees.

 Davenport et al~\cite{davenport20141} studied the one-bit matrix completion problem. Their model is a special case of the matrix completion model considered in this paper. In the one-bit matrix completion problem we assume that $\gstar$ is known and is the CDF of an appropriate probability distribution, and the matrix $X$ is a boolean matrix where each entry takes the value 1 with probability $M_{i,j}$, and $0$ with probability $1-M_{i,j}$. Since $\gstar$ is known, the focus in one-bit matrix completion problems is accurate estimation of $\Zstar$.


To the best of our knowledge the MMC model considered in this paper has not been investigated before. The MMC model is inspired by the single-index model (SIM) that has been studied both in statistics~\cite{kakade2011efficient,kalai2009isotron} and econometrics for regression problems~\cite{ichimura1993semiparametric,horowitz1996direct}. Our MMC model can be thought of as an extension of SIM to matrix completion problems.

\section{Algorithms for matrix completion}
\label{sec:alg}
Our goal is to estimate $\gstar$ and $\Zstar$ from the model in equations~(\ref{eqn:model2}-~\ref{eqn:model1}). We approach this problem via mathematical optimization.  Before we discuss our algorithms, we mention in brief an algorithm for the problem of learning Lipschitz, monotonic functions in $1$- dimension. This algorithm will be used for learning the link function in MMC.
\paragraph{The LPAV algorithm: }
Suppose we are given data $(p_1,y_1),\ldots (p_n,y_n)$, where $p_1\leq p_2\ldots\leq p_n$, and $y_1,\ldots,y_n$ are real numbers. Let $ \mathcal{G}\defeq\{g:\bbR \rightarrow \bbR, ~g~\text{ is L-Lipschitz and monotonic}\}$. The LPAV~\footnote{LPAV stands for Lipschitz Pool Adjacent Violator} algorithm introduced in~\cite{kakade2011efficient} outputs the best function $\hat{g}$ in $\cG$ that minimizes $\sum_{i=1}^n (g(p_i)-y_i)^2$. In order to do this, the LPAV first solves the following optimization problem: 
\begin{equation}
\label{opt:lpav}
  \hat{z}=\arg\min_{z\in\bbR^n} ~ \|z-y\|_2^2 \quad \textbf{s.t.} ~\ 0 \leq z_j-z_i\leq L(p_j-p_i) ~\text{if~} p_i \leq p_j
  \end{equation}
where, $\hat{g}(p_i)\defeq\hat{z}_i$. This gives us the value of $\hat{g}$ on a discrete set of points $p_1,\ldots,p_n$. To get $\hat{g}$ everywhere else on the real line, we simply perform linear interpolation as follows: 
\begin{equation}
   \label{eqn:interpolate}
    \hat{g}(\zeta)=
    \begin{cases}
     \hat{z}_{1}, & \text{if}\ \zeta \leq p_{1} \\
      \hat{z}_{n}, & \text{if}\ \zeta \geq p_{n} \\
      \mu \hat{z}_{i} + (1 - \mu) \hat{z}_{i+1} & \text{if}\ \zeta = \mu p_{i} + (1-\mu) p_{i+1}
    \end{cases}
  \end{equation} 

\subsection{Squared loss minimization}
\label{sec:squared_loss_approach}
 A natural approach to the monotonic matrix completion problem is to learn $\gstar,\Zstar$ via squared loss minimization. In order to do this we need to solve the following optimization problem:
 \begin{equation}
\begin{aligned}
\label{eqn:least_squares}
\min_{g,Z}& \sum_{\Omega} (g(Z_{i,j})-X_{i,j})^2\\
&g:\bbR\rightarrow \bbR~\text{is L-Lipschitz and monotonic}\\
&\rank(Z)\leq r.
\end{aligned} 
\end{equation}
 The problem is a non-convex optimization problem individually in parameters $g,Z$. A reasonable approach to solve this optimization problem would be to perform optimization w.r.t. each variable while keeping the other variable fixed. For instance, in iteration $t$, while estimating $Z$ one would keep $g$ fixed, to say $g^{t-1}$, and then perform projected gradient descent w.r.t. $Z$. This leads to the following updates for $Z$:
\begin{align}
\label{eqn:possible_update}
Z_{i,j}^{t}&\leftarrow Z_{i,j}^{t-1}-\eta (g^{t-1}(Z^{t-1}_{i,j})-X_{i,j}) (g^{t-1})'(Z_{i,j}^{t-1})~,\forall (i,j)\in \Omega\\
Z^{t}& \leftarrow P_r(Z^{t})
\end{align}
where $\eta>0$ is a step-size used in our projected gradient descent procedure, and $P_r$ is projection on the rank $r$ cone. The above update involves both the function $g^{t-1}$ and its derivative $(g^{t-1})'$. Since our link function is monotonic, one can use the LPAV algorithm to estimate this link function $g^{t-1}$. Furthermore since LPAV estimates $g^{t-1}$ as a piece-wise linear function, the function has a sub-differential everywhere and the sub-differential $(g^{t-1})'$ can be obtained very cheaply. Hence, the projected gradient update shown in equation~\eqref{eqn:possible_update} along with the LPAV algorithm can be iteratively used to learn estimates for $\Zstar$ and $\gstar$. We shall call this algorithm as $\mmc-$LS. Incorrect estimation of $g^{t-1}$ will also lead to incorrect estimation of the derivative $(g^{t-1})'$. Hence, we would expect $\mmc-$ LS to be less accurate than a learning algorithm that does not have to estimate $(g^{t-1})'$. We next outline an approach that provides a principled way to derive updates for $Z^t$ and $g^t$ that does not require us to estimate derivatives of the transfer function, as in $\mmc-$ LS.

\subsection{Minimization of a calibrated loss function and the MMC algorithm.} Let $\Phi:\bbR\rightarrow \bbR$ be a differentiable function that satisfies $\Phi'=\gstar$. Furthermore, since $\gstar$ is a monotonic function, $\Phi$ will be a convex loss function. Now, suppose $\gstar$ (and hence $\Phi$) is known. Consider the following function of $Z$
\begin{equation}
\label{eqn:calibrated_loss}
\cL(Z;\Phi,\Omega)=\bbE_{X} \left(\sum_{(i,j)\in \Omega}\Phi(Z_{i,j})-X_{i,j}Z_{i,j}\right).
\end{equation}
The above loss function is convex in $Z$, since $\Phi$ is convex.
Differentiating the expression on the R.H.S. of Equation~\ref{eqn:calibrated_loss} w.r.t. $Z$, and setting it to 0, we get
\begin{equation}
\label{eqn:derivative}
\sum_{(i,j)\in\Omega} \gstar(Z_{i,j})-\bbE X_{i,j}=0.
\end{equation} 
The $\mmc$ model shown in Equation~\eqref{eqn:model2} satisfies Equation~\eqref{eqn:derivative} and is therefore a minimizer of the loss function  $\cL(Z;\Phi,\Omega)$. Hence, the loss function~\eqref{eqn:calibrated_loss} is ``calibrated'' for the $\mmc$ model that we are interested in. The idea of using calibrated loss functions was first introduced for learning single index models~\cite{agarwal2014least}. When the transfer function is identity, $\Phi$ is a quadratic function and we get the squared loss approach that we discussed in section~\eqref{sec:squared_loss_approach}. 

The above discussion assumes that $\gstar$ is known. However in the $\mmc$ model this is not the case. To get around this problem, we consider the following optimization problem
\begin{equation}
\label{eqn:calibrated_problem}
\min_{\Phi,Z} \cL(\Phi,Z;\Omega)=\min_{\Phi,Z} \bbE_X\sum_{(i,j)\in\Omega} \Phi(Z_{i,j})- X_{i,j}Z_{i,j}
\end{equation} 
where $\Phi:\bbR\rightarrow \bbR$ is a convex function, with $\Phi'=g$ and $Z\in \bbR^{m\times n}$ is a low-rank matrix. Since, we know that $\gstar$ is a Lipschitz, monotonic function, we shall solve a constrained optimization problem that enforces Lipschitz constraints on $g$ and low rank constraints on $Z$.  We consider the sample version of the optimization problem shown in equation~\eqref{eqn:calibrated_problem}.
\begin{equation}
\label{eqn:calibrated_problem_sample}
\min_{\substack{\Phi\\ \rank(Z)\leq r}} \cL(\Phi,Z;\Omega)=\min_{\Phi,Z} \sum_{(i,j)\in\Omega} \Phi(Z_{i,j})- X_{i,j}Z_{i,j}
\end{equation} 

 The pseudo-code of our algorithm $\mmc$ that solves the above optimization problem~\eqref{eqn:calibrated_problem_sample} is shown in algorithm~\eqref{alg:mmc}. $\mmc$ optimizes for $\Phi$ and $Z$ alternatively, where we fix one variable and update another. 

 At the start of iteration $t$, we have at our disposal iterates $\hat{g}^{t-1}$, and $Z^{t-1}$. To update our estimate of $Z$, we perform gradient descent with fixed $\Phi$ such that $\Phi'=\hat{g}^{t-1}$. Notice that the objective in equation~\eqref{eqn:calibrated_problem_sample} is convex w.r.t. $Z$. This is in contrast to the least squares objective where the objective in equation~\eqref{eqn:least_squares} is non-convex w.r.t. $Z$.  The gradient of $\cL(Z;\Phi)$ w.r.t. $Z$ is
   \begin{equation}
   \nabla_{Z_{i,j}} \cL(Z;\Phi)= \sum_{(i,j)\in \Omega}\hat{g}^{t-1}(\hat{Z}_{i,j}^{t-1})-X_{i,j}. 
   \end{equation}
   Gradient descent updates on $\hat{Z}^{t-1}$ using the above gradient calculation leads to an update of the form
   \begin{align}
   \hat{Z}_{i,j}^{t}&\leftarrow \hat{Z}_{i,j}^{t-1}-\eta (\hat{g}^{t-1}(\hat{Z}_{i,j}^{t-1})-X_{i,j})\ones_{(i,j)\in\Omega}\nonumber\\
   \hat{Z}^t&\leftarrow \cP_r(\hat{Z}^t)\label{eqn:project}
   \end{align}
   Equation~\eqref{eqn:project} projects matrix $\hat{Z}^t$ onto a cone of matrices of rank $r$. This entails performing SVD on $\hat{Z}^t$ and retaining the top $r$ singular vectors and singular values while  discarding the rest. This is done in steps 4, 5 of Algorithm~\eqref{alg:mmc}.
   As can be seen from the above equation we \emph{do not} need to estimate derivative of $\hat{g}^{t-1}$. This, along with the convexity of the optimization problem in Equation~\eqref{eqn:calibrated_problem_sample} w.r.t. $Z$ for a given $\Phi$ are two of the key advantages of using a calibrated loss function over the previously proposed squared loss minimization formulation. 

   \textbf{Optimization over $\Phi$.} In round $t$ of algorithm~\eqref{alg:mmc}, we have $\hat{Z}^t$ after performing steps 4, 5. Differentiating the objective function in equation~\eqref{eqn:calibrated_problem_sample} w.r.t. $Z$, we get that the optimal $\Phi$ function should satisfy 
\begin{equation}
\label{eqn:gupdate}
\sum_{(i,j)\in\Omega} \hat{g}^t(\hat{Z}_{i,j}^{t})-X_{i,j}=0, 
\end{equation}
where $\Phi'=\hat{g}^t$.
 This provides us with a strategy to calculate $\hat{g}^t$. Let, $\hat{X}_{i,j}\defeq \hat{g}^t(\hat{Z}_{i,j}^{t})$. Then solving the optimization problem in equation~\eqref{eqn:gupdate} is equivalent to solving the following optimization problem.
\begin{equation}
\label{eqn:lpav}
\begin{aligned}
    \underset{\hat{X}}{\text{min}}&~ \sum_{(i,j)\in \Omega} (\hat{X}_{i,j}-X_{i,j})^2\\
  \text{subject to:}&~0\leq -\hat{X}_{i,j}+ \hat{X}_{k,l}\leq L(\hat{Z}_{k,l}^{t}-\hat{Z}_{i,j}^{t})~~\text{if}~ \hat{Z}_{i,j}^{t}\leq \hat{Z}_{k,l}^{t},~~ (i,j)\in \Omega, (k,l)\in \Omega\\
   \end{aligned}
   \end{equation} 
   where $L$ is the Lipschitz constant of $\gstar$. We shall assume that $L$ is known and does not need to be estimated. The gradient, w.r.t. $\hat{X}$, of the objective function, in equation~\eqref{eqn:lpav}, when set to zero is the same as Equation~\eqref{eqn:gupdate}. The constraints enforce monotonicity of $\hat{g}^t$ and the Lipschitz property of $\hat{g}^t$. The above optimization routine is exactly the LPAV algorithm. The solution $\hat{X}$ obtained from solving the LPAV problem can be used to define $\hat{g}^t$ on $X_{\Omega}$. These two steps are repeated for $T$ iterations. After $T$ iterations we have $\hat{g}^T$ defined on $\hat{Z}^T_{\Omega}$. In order to define $\hat{g}^T$ everywhere else on the real line we perform linear interpolation as shown in equation~\eqref{eqn:interpolate}.
\begin{algorithm}[H]
  \caption{Monotonic Matrix Completion (MMC)\label{alg:mmc}}
  \begin{algorithmic}[1]
  \REQUIRE  Parameters $\eta>0,T>0,r$, Data:$X_{\Omega},\Omega$
  \ENSURE $ \hat{M}=\hat{g}^T(\hat{Z}^T)$
  \STATE Initialize $\hat{Z}^0=\frac{mn}{\siom}X_{\Omega}$, where $X_{\Omega}$ is the matrix $X$ with zeros filled in at the unobserved locations.
  \STATE Initialize $\hat{g}^0(z)=\frac{\siom}{mn}z$ 
   \FOR{$t=1,\ldots,T$}
   \STATE $\hat{Z}_{i,j}^{t}\leftarrow \hat{Z}_{i,j}^{t-1}-\eta (\hat{g}^{t-1}(\hat{Z}_{i,j}^{t-1})-X_{i,j})\ones_{(i,j)\in\Omega}$
   \STATE $\hat{Z}^t\leftarrow \cP_{r}(\hat{Z}^t)$
   \STATE Solve the optimization problem in ~\eqref{eqn:lpav} to get $\hat{X}$
   \STATE Set $\hat{g}^t(\hat{Z}_{i,j}^t)=\hat{X}_{i,j}$ for all $(i,j)\in \Omega$. 
   \ENDFOR
   \STATE Obtain $\hat{g}^T$ on the entire real line using linear interpolation shown in equation~\eqref{eqn:interpolate}.
  \end{algorithmic}
\end{algorithm}
Let us now explain our initialization procedure. Define $X_{\Omega}\defeq\sum_{j=1}^{\siom} X\circ \Delta_j$, where each $\Delta_j$ is a boolean mask with zeros everywhere and a $1$ at an index corresponding to the index of an observed entry. $A\circ B$ is the Hadamard product, i.e. entry-wise product of matrices $A,B$. We have $\siom$ such boolean masks each corresponding to an observed entry. We initialize $\hat{Z}^0_{\Omega}$ to $\frac{mn}{\siom}X_{\Omega}=\frac{mn}{\siom}\sum_{j=1}^{\siom} X\circ \Delta_j$. Because each observed index is assumed to be sampled uniformly at random with replacement, our initialization is guaranteed to be an unbiased estimate of $X$. 
\section{MSE Analysis of  \mmc}
\label{sec:analysis}
We shall analyze our algorithm, $\limco$, for the case of $T=1$, under the modeling assumption shown in Equations~\eqref{eqn:model1} and ~\eqref{eqn:model2}. Additionally, we will assume that the matrices $\Zstar$ and $\Mstar$ are bounded entry-wise in absolute value by $1$. When $T=1$, the $\limco$ algorithm estimates $\Zhat$, $\ghat$ and $\Mhat$ as follows 
\begin{equation}
\label{eqn:proj_z}
\Zhat=\cP_r\left(\frac{mn\Xom}{\siom}\right).
\end{equation}
$\ghat$ is obtained by solving the LPAV problem from Equation~\eqref{eqn:lpav} with $\Zhat$ shown in Equation~\eqref{eqn:proj_z}. This allows us to define $\Mhat_{i,j}=\ghat(\Zhat_{i,j}), \forall i=[n], j=[m]$.

Define the mean squared error (MSE) of our estimate $\Mhat$ as 
\begin{equation}
\MSE(\Mhat)=\bbE \left[\frac{1}{mn}\sum_{i=1}^n \sum_{j=1}^m (\Mhatij-\Mij)^2\right].
\end{equation}
Denote by $||M||$ the spectral norm of a matrix $M$. We need the following additional technical assumptions:
\begin{itemize}
\item[A1.] $\|\Zstar\|=O(\sqrt{n})$.
\item[A2.] $\sigma_{r+1}(X)=\tilde{O}(\sqrt{n})$ with probability at least $1-\delta$, where $\tilde{O}$ hides terms logarithmic in $1/\delta$.
\end{itemize}
$\Zstar$ has entries bounded in absolute value by $1$. This means that in the worst case, $||\Zstar||=\sqrt{mn}$. Assumption A1 requires that the spectral norm of $\Zstar$ is not very large. Assumption A2 is a weak assumption on the decay of the spectrum of $\Mstar$. By assumption $X=\Mstar+N$. Applying  Weyl's inequality we get $\sigma_{r+1}(X)\leq \sigma_{r+1}(\Mstar)+\sigma_1(N)$. Since $N$ is a zero-mean noise matrix with independent bounded entries, $N$ is a matrix with sub-Gaussian entries. This means that $\sigma_1(N)=\tilde{O}(\sqrt{n})$ with high probability. Hence, assumption A2 can be interpreted as imposing the condition $\sigma_{r+1}(\Mstar)=O(\sqrt{n})$. This means that while $\Mstar$ could be full rank, the $(r+1)^{\text{th}}$ singular value of $\Mstar$ cannot be too large.
\begin{thm}
\label{thm:main}
Let $\mu_1\defeq \bbE ||N||, \mu_2\defeq \bbE ||N||^2$. Let $\alpha=||\Mstar-\Zstar||$. Then, under assumptions A1 and A2, the MSE of the estimator output by $\mmc$ with $T=1$ is given by
\begin{equation}
\label{eqn:bound}
\begin{split}
\MSE(\Mhat)&=O\Biggl(\sqrt{\frac{r}{m}}+\frac{\sqrt{mn\log(n)}}{\siom}+\frac{mn}{\siom^{3/2}}+\sqrt{\frac{r}{m\sqrt{n}}\left(\mu_1+\frac{\mu_2}{\sqrt{n}}\right)}+\\
&\qquad \sqrt{\frac{r\alpha}{m\sqrt{n}}\left(1+\frac{\alpha}{\sqrt{n}}\right)}+
 \sqrt{\frac{rmn\log^2(n)}{\siom^2}}\Biggr).
\end{split}
\end{equation}
\end{thm}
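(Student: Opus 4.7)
}
The natural starting point is the entrywise decomposition
\[
\Mhat_{i,j}-\Mstar_{i,j}=\bigl[\ghat(\Zhat_{i,j})-\gstar(\Zhat_{i,j})\bigr]+\bigl[\gstar(\Zhat_{i,j})-\gstar(\Zstar_{i,j})\bigr].
\]
Using the $L$-Lipschitz property of $\gstar$ on the second bracket and summing, I would reduce $\mathrm{MSE}(\Mhat)$ to two matrix quantities: an ``$L^2$ times reconstruction error'' term $\tfrac{1}{mn}\|\Zhat-\Zstar\|_F^2$ and an ``LPAV error'' term $\tfrac{1}{mn}\|\ghat(\Zhat)-\gstar(\Zhat)\|_F^2$. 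The various pieces of the bound in \eqref{eqn:bound} should then be traceable to these two sources.

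For the first term, I would exploit that $\Zhat-\Zstar$ has rank at most $2r$, so $\|\Zhat-\Zstar\|_F\leq \sqrt{2r}\,\|\Zhat-\Zstar\|$, and then bound the spectral norm by writing $Y\defeq \tfrac{mn}{\siom}X_\Omega$ and using
\[
\|\Zhat-\Zstar\|\leq \|\cP_r(Y)-Y\|+\|Y-\Mstar\|+\|\Mstar-\Zstar\|\leq \sigma_{r+1}(Y)+\|Y-\Mstar\|+\alpha.
\]
The middle term is the de-biased sampled-matrix concentration inequality: $Y$ is an unbiased estimate of $\Mstar$ (up to the additive noise $N$), so a matrix Bernstein / Ahlswede--Winter argument yields $\|Y-\Mstar\|=\tilde O(\sqrt{mn}/\sqrt{\siom})$ plus a higher-order $mn/\siom$ piece, which, once squared and combined with the $\sqrt{r}$ prefactor, produces the $\sqrt{r/m}$, $\sqrt{mn\log n}/\siom$, and $\sqrt{rmn\log^2 n/\siom^2}$ terms. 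The term $\sigma_{r+1}(Y)$ is where assumption A2 (and Weyl plus sub-Gaussian noise) enters, while $\alpha$ produces the two terms carrying $\alpha$.

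For the LPAV term I would use the basic-inequality strategy. The function $\gstar$ is $L$-Lipschitz and monotonic, so the vector with entries $\gstar(\Zhat_{i,j})$ is feasible for the LPAV program~\eqref{eqn:lpav} driven by the ordering of $\Zhat$. Optimality of $\ghat$ then gives
\[
\sum_{(i,j)\in\Omega}\bigl(\ghat(\Zhat_{i,j})-X_{i,j}\bigr)^2\leq \sum_{(i,j)\in\Omega}\bigl(\gstar(\Zhat_{i,j})-X_{i,j}\bigr)^2.
\]
Expanding and using $X_{i,j}-\gstar(\Zhat_{i,j})=N_{i,j}+[\gstar(\Zstar_{i,j})-\gstar(\Zhat_{i,j})]$ together with the Lipschitz bound $|\gstar(\Zstar_{i,j})-\gstar(\Zhat_{i,j})|\leq L|\Zstar_{i,j}-\Zhat_{i,j}|$, I can turn the cross-term into a Cauchy--Schwarz pairing that recycles the reconstruction error already controlled above, plus a stochastic term driven by $\bbE\|N\|$ and $\bbE\|N\|^2$, yielding the $\mu_1,\mu_2$-dependent contributions.

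The final subtlety, and the step I expect to be hardest, is passing from the restricted Frobenius norm on $\Omega$ to the full $n\times m$ norm needed for the MSE. I would handle this by noting that $\ghat$ is defined everywhere by the Lipschitz interpolation~\eqref{eqn:interpolate}, so for any $(i,j)\notin\Omega$, $|\ghat(\Zhat_{i,j})-\gstar(\Zhat_{i,j})|$ is controlled by values at nearby $\Zhat_{k,\ell}$ with $(k,\ell)\in\Omega$ via the common $L$-Lipschitz constraint, plus the worst-case boundary behavior of $\ghat$. Combined with the uniform-with-replacement sampling (which fills $\Omega$ densely in the order statistics of $\Zhat$), this extension introduces a further $\sqrt{mn/\siom}$-type slack, whose squaring matches the remaining $mn/\siom^{3/2}$ contribution in \eqref{eqn:bound}. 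Putting the two streams together via $\sqrt{a+b}\leq\sqrt{a}+\sqrt{b}$ and taking expectations over $\Omega$ and $N$ finishes the argument.
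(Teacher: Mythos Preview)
Your overall skeleton---splitting $\Mhat-\Mstar$ into the LPAV error $\ghat(\Zhat)-\gstar(\Zhat)$ and the Lipschitz-controlled reconstruction error $\gstar(\Zhat)-\gstar(\Zstar)$, then bounding the latter via spectral norm plus matrix Bernstein---matches the paper. But two of the steps you sketch would not close.

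\textbf{The LPAV cross-term.} After the basic inequality you expand $X-\gstar(\Zhat)=N+[\gstar(\Zstar)-\gstar(\Zhat)]$ and say Cauchy--Schwarz handles the noise piece, producing the $\mu_1,\mu_2$ terms. It does not. Cauchy--Schwarz on $\sum_\Omega(\ghat-\gstar)N$ leaves you with a factor $\tfrac{1}{|\Omega|}\sum_\Omega N_{i,j}^2$, whose expectation is the per-entry noise variance---an $O(1)$ quantity that does not decay with $m,n,|\Omega|$ and would swamp the $\sqrt{r/m}$ bound. The paper avoids this entirely: it uses the generalized Pythagorean inequality for the LPAV projection, then inserts the Bayes predictor $\gstar(\Zstar)$ and splits $T_{1,1}$ into four pieces $I_1,\dots,I_4$. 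The crucial step is $I_2\le 0$, which encodes that $\gstar(\Zstar)=\bbE[X]$ is the least-squares-optimal predictor; this is what makes the entrywise noise disappear rather than Cauchy--Schwarz. Relatedly, your attribution of the $\mu_1,\mu_2$ terms is off: in the paper they arise in the bound on $\epsilon_2=\tfrac{1}{mn}\bbE\|\Zhat-\Zstar\|_F^2$ through $\bbE\|X-\Zstar\|\le\alpha+\mu_1$ and $\bbE\|X-\Zstar\|^2\le 2\alpha^2+2\mu_2$, not in the LPAV step.

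\textbf{The $\Omega$-to-full transfer.} Your Lipschitz-interpolation/order-statistics argument is not how the paper proceeds, and it is unclear it yields the stated rates (note also that the order statistics of $\Zhat_\Omega$ and the function $\ghat$ are both built from the same $\Omega$, so ``density of sampling'' is not a clean independent event). The paper instead uses a uniform deviation bound over the class of bounded, $L$-Lipschitz, monotone functions via a covering-number/Dudley argument (its Lemma~1), which gives a slack of order $\sqrt{W/|\Omega|}$ where $W$ is the domain width. This forces an extra, nontrivial step you do not mention: one must control $|\Zhat|_\infty$ (hence $W$) on a high-probability event, which the paper does by combining $\sigma_{r+1}(X)$ with the matrix-Bernstein bound on $\|A-X\|$; this is exactly where assumption A2 is used and where the $mn/|\Omega|^{3/2}$ and $\sqrt{rmn\log^2 n}/|\Omega|$ terms originate.
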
 
where $O(\cdot)$ notation hides universal constants, and the Lipschitz constant $L$ of $\gstar$. 
 We would like to mention that the result derived for $\mmc$-1 can be made to hold true for $T>1$, by an additional large deviation argument. The proof of our theorem is available in the appendix.

\textbf{Interpretation of our results:} Our upper bounds on the MSE of $\limco$ depends on the quantity $\alpha=||\Mstar-\Zstar||$, and $\mu_1,\mu_2$. Since matrix $N$ has independent zero-mean entries which are bounded in absolute value by $1$, $N$ is a sub-Gaussian matrix with independent entries. For such matrices $\mu_1=O(\sqrt{n}),\mu_2=O(n)$ (see Theorem 5.39 in~\cite{vershynin2010introduction}). With these settings we can simplify the expression in Equation~\eqref{eqn:bound} to
\begin{equation*}
\begin{split}
\label{eqn:samp_comp}
\MSE(\Mhat)&=\tilde{O}\Biggl(\sqrt{\frac{r}{m}}+\frac{\sqrt{mn\log(n)}}{\siom}+\frac{mn}{\siom^{3/2}}+ \sqrt{\frac{r\alpha}{m\sqrt{n}}\left(1+\frac{\alpha}{\sqrt{n}}\right)}+
 \sqrt{\frac{rmn\log^2(n)}{\siom^2}}\Biggr).
 \end{split}
\end{equation*}
A remarkable fact about our sample complexity results is that the sample complexity is independent of the rank of matrix $\Mstar$, which could be large. Instead it depends on the rank of matrix $\Zstar$ which we assume to be small. The dependence on $\Mstar$ is via the term $\alpha=||\Mstar-\Zstar||$.  From equation~\eqref{eqn:samp_comp} it is evident that the best error guarantees are obtained when $\alpha=O(\sqrt{n})$. For such values of $\alpha$ equation~\eqref{eqn:samp_comp} reduces to,
\begin{equation*}
\begin{split}
\MSE(\Mhat)=\tilde{O}\Biggl(\sqrt{\frac{r}{m}}+\frac{\sqrt{mn\log(n)}}{\siom}+\frac{mn}{\siom^{3/2}}+\frac{\sqrt{mn}}{\siom}+ \sqrt{\frac{rmn\log^2(n)}{\siom^2}}\Biggr).
 \end{split}
\end{equation*}
 This result can be converted into a sample complexity bound as follows. If we are given $\siom=\tilde{O}(\left(\frac{mn}{\epsilon}\right)^{2/3})$, then $\MSE(\Mhat)\leq \sqrt{\frac{r}{m}}+\epsilon.$

\section{Experimental results}
\label{sec:expts}
We compare the performance of $\limco-1$, $\limco-c$, $\limco$- LS, and nuclear norm based low-rank matrix completion (LRMC)~\cite{candes2009exact} on various synthetic and real world datasets. The objective metric that we use to compare different algorithms is the root mean squared error (RMSE) of the algorithms on unobserved, test indices of the incomplete matrix. 

\subsection{Efficient implementation of $\limco$}
The $\limco$ algorithm consists of three main steps. A gradient descent step, a projection step onto the set of rank $r$ matrices, and a QP optimization step. On top of these one has to find out the correct values for the rank $r$, and the step size $\eta$ used in step 4 of $\limco$. A straight forward implementation of the above three steps can be coupled with a grid search over $r$ and $\eta$. However, such an implementation would be inefficient and not scalable beyond small matrices. Since $r$ is assumed to be small one needs to find just the top few singular vectors of the intermediate matrices. We use PROPACK~\footnote{Look at~\url{http://svt.stanford.edu/code.html} for download instructions.} package for an efficient SVD implementation. Rather than search for $r$ on a grid, we use an increasing rank procedure to estimate $r$.  The increasing rank procedure starts at a small value for $r$, say $r_{\text{min}}$ and increases the current estimate of $r$, by $r_{\text{inc}}$, whenever there is not much progress in the iterates. Precisely, the current estimate of $r$ is increased by $r_{\text{inc}}$ if 
\begin{equation}
1-\frac{||P_{\Omega}(\hat{X}^t-X)||_F}{||P_{\Omega}(\hat{X}^{t-1}-X)||_F}\leq \epsilon,
\end{equation}
for some small $\epsilon>0$. As we expect $r$ to be small, our estimate for $r$ is not allowed to grow beyond a certain $r_{\text{max}}$. The increasing rank procedure was inspired by the work of~\cite{wen2012solving}, where they demonstrated that such procedures are suitable for low-rank approximation problems. 

\textbf{An ADMM implementation for LPAV problem.} The LPAV problem shown in Equation~\eqref{eqn:lpav} is solved in each iteration of $\limco$. Hence, it is crucial that we have an efficient implementation of LPAV so that the overall algorithm is efficient. For ease of notation, we shall frame the LPAV problem as follows: Suppose we have a 1-dimensional regression problem with covariates $z_1\leq z_2\leq \ldots \leq z_f$ and the corresponding targets $x_1, x_2,\ldots x_f$ in $\bbR$. The LPAV routine solves the problem
\begin{equation}
\begin{aligned}
\label{eqn:lpav_admm}
    \underset{y\in \bbR^f}{\text{min}}&~ \sum_{i=1}^f (y_i-x_i)^2\\
  \text{subject to:} &~0\leq y_{i+1}-y_i\leq L(z_{i+1}-z_{i}), \text{for all~} i=1,\ldots,f-1.
   \end{aligned}
   \end{equation}    
   One could transform the above optimization problem into a box-constrained convex QP, by rewriting the objective and the constraints using the transformation $\delta_i=y_{i+1}-y_i$. Box constrained QPs are well studied optimization problems and a plethora of efficient solvers exist. However, the above transformation destroys the simple least-squares structure of the objective, and in our experience the resulting box-constrained QP turned out to be harder to solve than the original optimization problem~\eqref{eqn:lpav_admm}. Instead we use the Alternating Direction Method of Multipliers (ADMM) algorithm for our problem~\cite{boyd2011distributed} which allows us to exploit the rich structure present in problem~\eqref{eqn:lpav_admm}. We shall now explain ADMM updates as applied to problem~\eqref{eqn:lpav_admm}.  Let $b$ be a vector in $\bbR^f$, with the $j^{\text{th}}$ component being $b_j=L(z_{j+1}-z_{j})$. Let $M_1, M_2$ be two matrices in $\bbR^{f-1\times f}$, constructed as follows. The $i^{th}$ row of $M_1$ has $1$ in the $i^{th}$ column and $-1$ in the $i+1^{th}$ column. The $i^{th}$ row of $M_2$ has $-1$ in the $i^{th}$ column and $1$ in the $i+1^{th}$ column. Rest of the entries in $M_1, M_2$ are zeros. Let 
   $M=
   \bigl(\begin{smallmatrix}
	M_1\\ M_2
\end{smallmatrix}\bigr)$ be a matrix in $\bbR^{2p-2\times p}$. Finally let $\bar{b}=\bigl[\begin{smallmatrix}0 &\\ b &\end{smallmatrix}\bigr]$ be a $2f-2$ dimensional vector whose first $f-1$ entries are $0$, and the last $f-1$ entries are the first $f-1$ entries of the vector $b$. With these definitions, problem~\eqref{eqn:lpav_admm} can be reformulated as
\begin{equation}
\label{eqn:reform1}
\begin{aligned}
    \underset{y\in \bbR^f}{\text{min}}&~ \sum_{i=1}^f (y_i-x_i)^2\\
  \text{subject to:}&~ My\leq \bar{b}.
   \end{aligned}
   \end{equation}   
   In order to derive efficient ADMM updates, we shall convert the inequality constraints to equality by introducing slack variables. This gets us the following reformulation

   \begin{equation}
\label{eqn:lpav_admm2}
\begin{aligned}
    \underset{\bar{y}\in \bbR^{3f-2}}{\text{min}}&~ \sum_{i=1}^f (\bar{y}_i-x_i)^2\\
  \text{subject to:}&~ \bar{M}\bar{y}= \bar{b}\\
  &[0_{f\times f},I_{f\times f}]\bar{y}\geq 0
   \end{aligned}
   \end{equation}
   where $\bar{M}=[M_{2p-2\times p}, I_{2f-2\times 2f-2}]$. Note that this problem has $3f-2$ variables in contrast to the $f-1$ variables in Equation~\eqref{eqn:reform1}. The extra $2f-1$ variables correpsond to the slack variables introduced when converting the inequality constraints in~\eqref{eqn:reform1} to equality constraints in~\eqref{eqn:lpav_admm2}. 
The ADMM algorithm provides an iterative procedure for solving convex optimization problems of the form
\begin{equation}
\begin{aligned}
\underset{\bar{y},z}{\text{min}}~ & f(\bar{y})+g(z)\\
\text{subject to:}~& A\bar{y}+Bz=0.
\end{aligned}
\end{equation}
We can cast the optimization problem in Equation~\eqref{eqn:lpav_admm2} in the above form by defining $A=-B=I$,
\begin{equation}
f(\bar{y})=
\begin{cases}
\sum_{i=1}^f (\bar{y}_i -x_i)^2~\text{if}~ \bar{M}\bar{y}=b\\
\infty~\text{otherwise}
\end{cases}
\end{equation}
and 
\begin{equation*}
g(z)=
\begin{cases}
0~\text{if} ~[0_{f\times f}, I_{f\times f}]z\geq 0\\
\infty~\text{otherwise}.
\end{cases}
\end{equation*}
The ADMM algorithm involves iteratively updating $\bar{y},z$ and some additional variables until convergence. In our practical implementations we follow the advice as mentioned in~\cite{boyd2011distributed} and set $\epsilon^{\text{abs}}$ and $\epsilon^{\text{rel}}$ to $10^-2$. Any value less than $10^-2$ did not yield much change in results.

\textbf{Updating $\bar{y}$:} In order to update $\bar{y}$ iteratively we need to solve the following optimization problem. Given two vectors $z^k, u^k$, and $\gamma>0$, we solve
\begin{equation}
\begin{aligned}
\bar{y}^{k+1}=\arg\min_{\bar{y}}& \sum_{i=1}^f (\bar{y}_i-x_i)^2 +\frac{\gamma}{2} ||\bar{y}+u^k-z^k||^2\\
\text{subject to:}&~ \bar{M}\bar{y}=\bar{b}.
\end{aligned}
\end{equation}
This is an equality constrained QP, which can be solved in  closed form by using the KKT conditions. The solution is obtained by solving the following set of sparse linear equations.
\begin{equation}
\label{eqn:system}
\begin{bmatrix}
P_{3f-2\times 3f-2}+\gamma I_{3f-2\times 3f-2}& \bar{M}^\top\\
\bar{M}& 0_{2f-2\times 2f-2} 
\end{bmatrix}
\begin{bmatrix}
\bar{y}^{k+1}\\
\nu
\end{bmatrix}=
\begin{bmatrix}
-(q+\gamma (u^k-z^k))\\
\bar{b},
\end{bmatrix}
\end{equation}
where $P,q$ are defined as follows
\begin{align}
P=2\begin{bmatrix}
I_{f\times f}& 0_{f\times 2f-2}\\
0_{2f-2\times f}& 0_{2f-2\times 2f-2} \end{bmatrix}
, q=-2\begin{bmatrix}
x\\
0_{2p-2}
\end{bmatrix}
\end{align}
The above system of linear equations is very large with $5p-4$ equations in $5p-4$ unknowns but very sparse with $7p-6$ non-zeros in the matrix on the LHS of Equation~\eqref{eqn:system}. Such large, sparse system of equations can be solved very efficiently, and in our numerical experiments we simply use the backslash operator in MATLAB to solve them.

\textbf{Updates for $z$:} The ADMM updates applied to $z$ yield the following optimization problem
\begin{equation}
\begin{aligned}
z^{k+1} =\arg\min_{z\in \bbR^{3p-2}, } ||z-(y^{k+1}+u^{k})||^2.\\
\text{subject to:}~ z_k \geq 0~\text{for}~ k\geq p+1
\end{aligned}
\end{equation}
The solution to the above optimization problem is trivial and can be written in closed form as  follows
\begin{equation}
z_j^{k+1}=
\begin{cases}
y^{k+1}_j+u^k_j ~\text{if}~ j=1,\ldots p\\
\max\{y^{k+1}_j+u^k_j, 0\}~\text{if}~ j\geq p+1
\end{cases}
\end{equation}

\textbf{Updates for $u$:} The update for the intermediate variable $u$ is given by the equation
\begin{equation}
u^{k+1} = u^k+\bar{y}^{k+1}-z^{k+1}.
\end{equation}
Hence, the ADMM algorithm for the LPAV routine requires iteratively updating $\bar{y},z,u$ until desired tolerance levels.

\subsection{Synthetic experiments}
For our synthetic experiments we generated a random $30\times 20$ matrix $\Zstar$ of rank $5$ by taking the product of two random Gaussian matrices of size $n\times r$, and $r\times m$, with $n=30,m=20,r=5$. The matrix $\Mstar$ was generated using the function, $\gstar(\Mstarij)=1/(1+\exp(-c\Zstarij))$, where $c>0$. By increasing $c$, we increase the Lipschitz constant of the function $\gstar$, making the matrix completion task harder.  For large enough $c$, $M_{i,j}\approx \sgn(Z_{i,j})$. We consider the noiseless version of the problem where $X=\Mstar$. Each entry in the matrix $X$ was sampled with probability $p$, and the sampled entries are observed. This makes $\bbE \siom=mnp$. For our implementations we assume that $r$ is unknown, and estimate it either (i) via the use of a dedicated validation set in the case of $\mmc-1$ or (ii) adaptively, where we progressively increase the estimate of our rank until a sufficient decrease in error over the training set is achieved~\cite{wen2012solving}. For an implementation of the LRMC algorithm we used a standard off-the-shelf implementation from TFOCS~\cite{becker2011tfocs}. In order to speed up the run time of $\mmc$, we also keep track of the training set error, and terminate iterations if the relative residual on the training set goes below a certain threshold~\footnote{For our experiments this threshold is set to $0.001$.}. In the appendix we provide a plot that demonstrates that, for $\mmc-c$, the RMSE on the training dataset has a decreasing trend and reaches the required threshold in at most 50 iterations. Hence, we set $T=50$. Figure~\eqref{fig:synth} show the RMSE of each method for different values of $p,c$.
\begin{figure}[t!]
 \begin{subfigure}[b]{0.15\columnwidth}
    \includegraphics[height=1.0in]{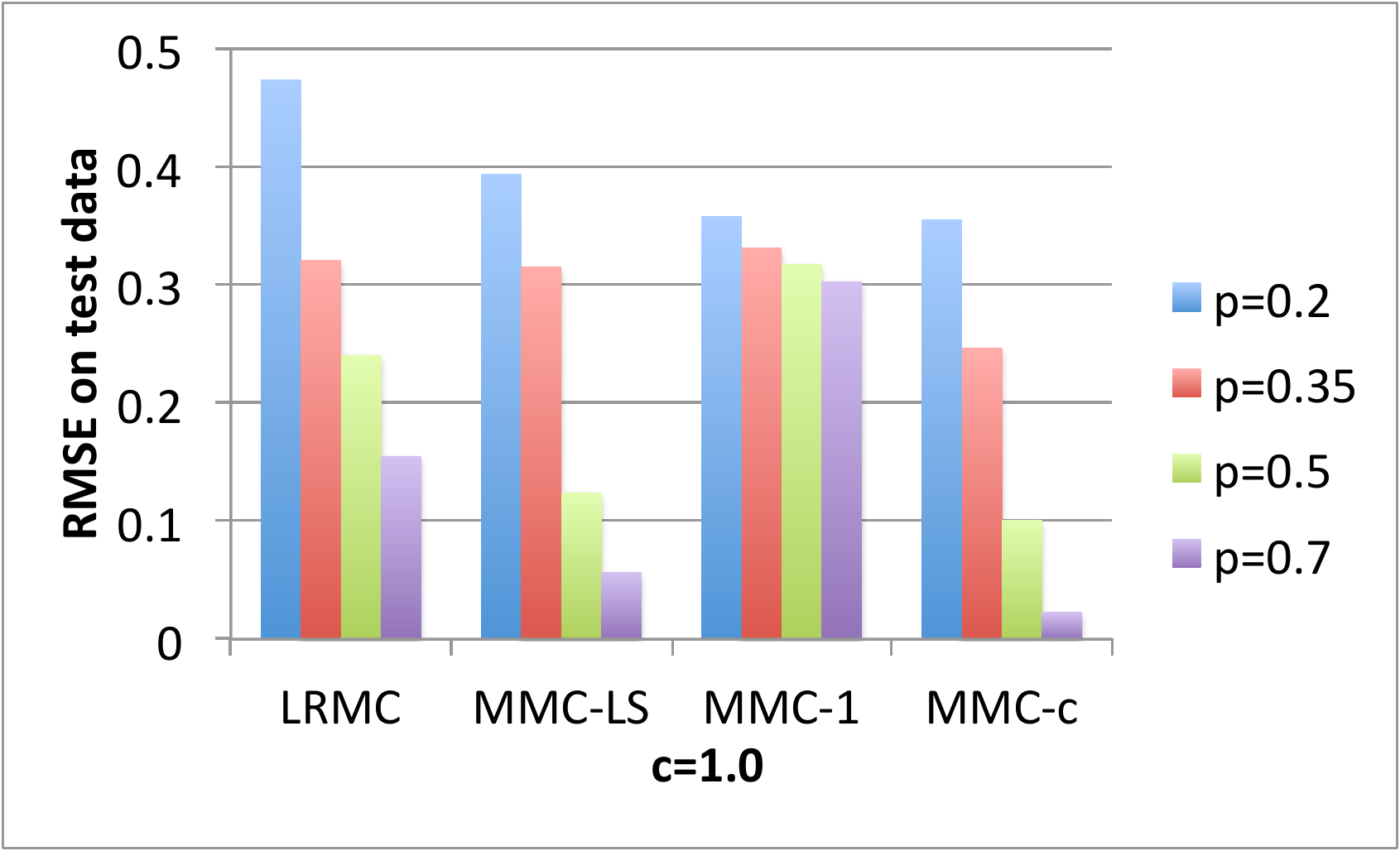}
 \end{subfigure}
 \hspace{0.99in}
 \begin{subfigure}[b]{0.15\columnwidth}
    \includegraphics[height=1.0in]{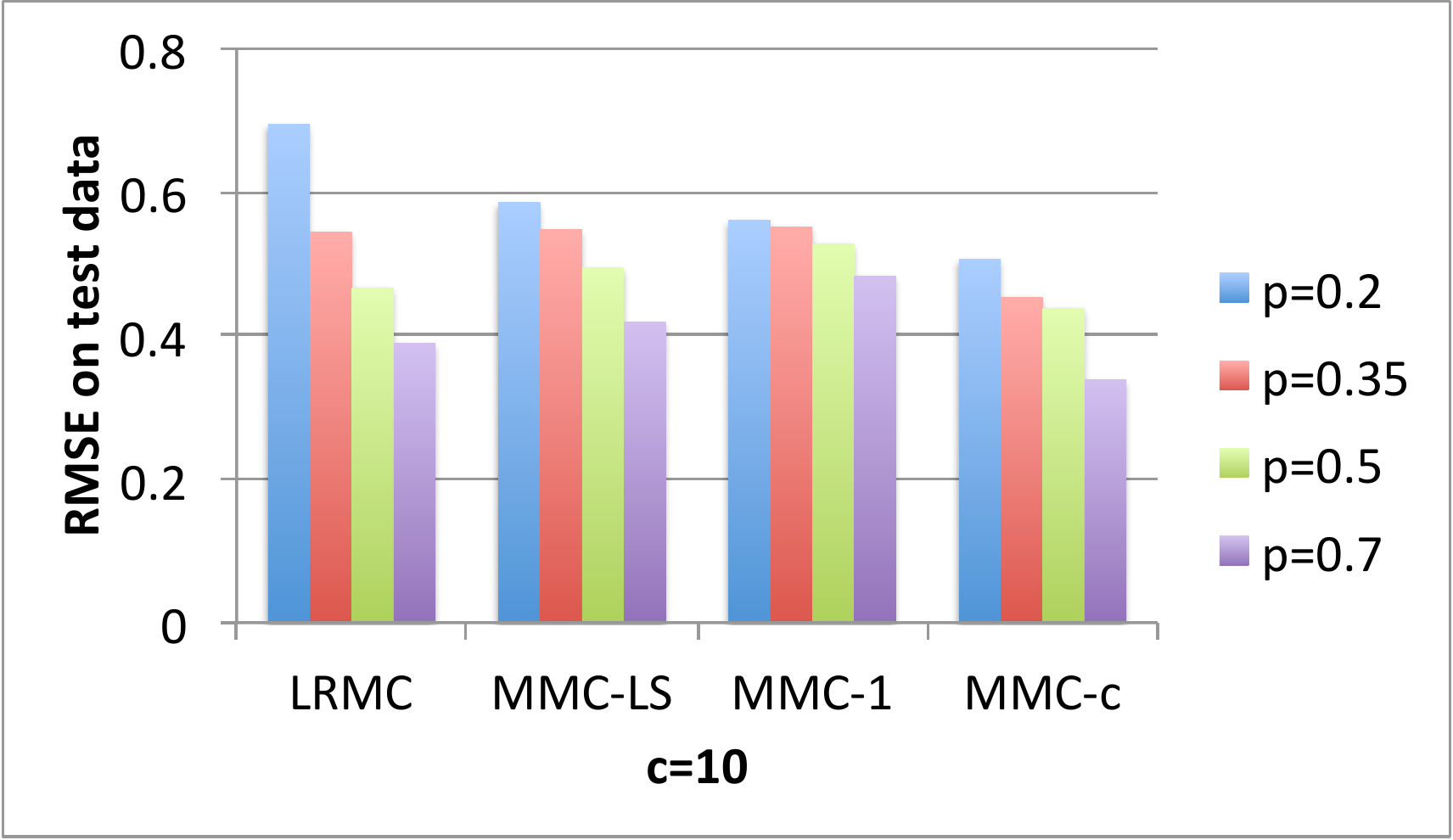}
 \end{subfigure}
 \hspace{0.99in}
 \begin{subfigure}[b]{0.15\columnwidth}
    \includegraphics[height=1.0in]{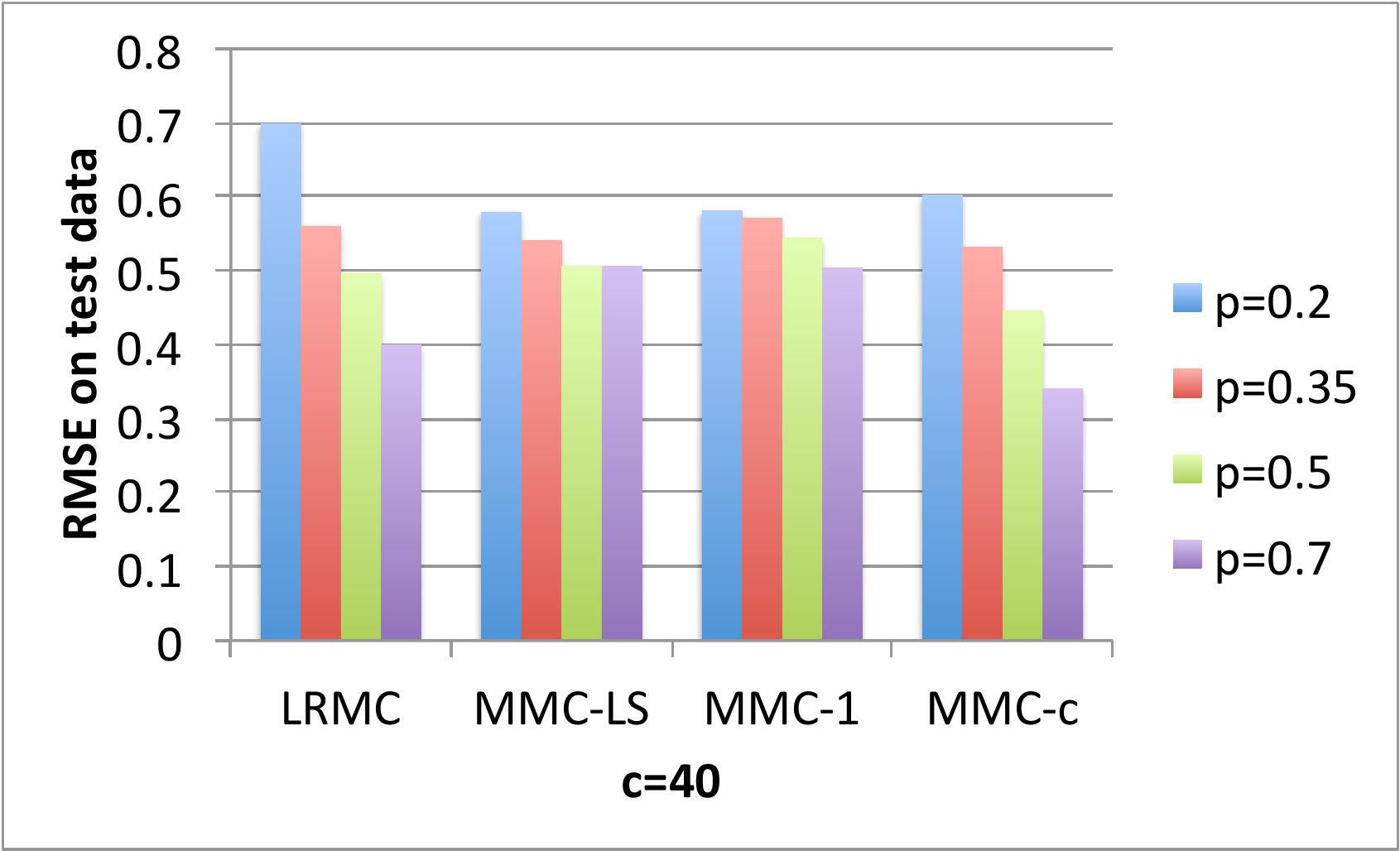}
 \end{subfigure}
\caption{RMSE of different methods at different values of $c$.\label{fig:synth}}
\vspace{-15pt}
\end{figure}
As one can see from figure~\eqref{fig:synth}, the RMSE of all the methods improves for any given $c$ as $p$ increases. This is expected since as $p$ increases $\bbE\siom=pmn$ also increases. As $c$ increases, $\gstar$ becomes  steeper increasing the effective rank of $X$. This makes matrix completion task hard. For small $p$, such as $p=0.2$, $\mmc-1$ is competitive with $\mmc-c$ and $\mmc-$LS and is often the best. In fact for small $p$, irrespective of the value of $c$, LRMC is far inferior to other methods. For larger $p$, $\mmc-c$ works the best achieving smaller RMSE over other methods.
\vspace{-5pt}
\subsection{Experiments on real datasets}
We performed experimental comparisons on four real world datasets: paper recommendation, Jester-3, ML-100k, Cameraman. The source of our datasets is listed in the appendix. All of the above datasets, except the Cameraman dataset, are ratings datasets, where users have rated a few of the several different items. For the Jester-3 dataset we used $5$ randomly chosen ratings for each user for training, $5$ randomly chosen rating for validation and the remaining for testing. ML-100k comes with its own training and testing dataset. We used 20\% of the training data for validation. For the Cameraman and the paper recommendation datasets 20\% of the data was used for training, 20\% for validation and the rest for testing.  The baseline algorithm chosen for low rank matrix completion is LMaFit-A~\cite{wen2012solving}~\footnote{\url{http://lmafit.blogs.rice.edu/}. The parameter $k$ in the LMaFit algorithm was set to effective rank, and we used est\_rank=1 for LMaFit-A.}. 

For each of the datasets we report the RMSE of $\mmc-1$, $\mmc-c$, and LMaFit-A on the test sets. We excluded $\mmc$-LS from these experiments because in all of our datasets the number of observed entries is a very small fraction of the total number of entries, and from our results on synthetic datasets we know that $\mmc-$ LS is not the best performing algorithm in such cases. Table~\ref{tab:real} shows the RMSE over the test set of the different matrix completion methods. As we see the RMSE of $\mmc-c$ is the smallest of all the methods, surpassing LMaFit-A by a large margin.

\begin{table}[H]
\centering
\caption{\label{tab:real}RMSE of different methods on real datasets.} 
\begin{tabular}{|l|l|l|l|l|l|l|}
\hline
Dataset&Dimensions&$\siom$&$r_{0.01}(X)$&LMaFit-A&$\limco-1$& $\limco-c$\\
\hline
PaperReco&$3426\times 50$&34294&47&0.4026&0.4247&\textbf{0.2965}\\
\hline
Jester-3&$24938\times 100$&124690&66&6.8728&5.327&\textbf{5.2348}\\
\hline
ML-100k&$1682\times 943$&64000&391&3.3101&1.388&\textbf{1.1533}\\
\hline
Cameraman&$1536\times 512$&157016&393&0.0754&0.1656&\textbf{0.06885}\\

\hline
\end{tabular}
\end{table}
\section{Conclusions and future work}
\label{sec:conc}
We have investigated a new framework for high rank matrix completion problems called monotonic matrix completion. We proposed and studied an algorithm called $\limco$ based on minimizing a calibrated loss function. In the future we would like to investigate if one could relax the technical assumptions involved in establishing our theoretical results.

\bibliographystyle{unsrt}
\small{\bibliography{mmc}}

\appendix

\section{Error Analysis of Monotonic Matrix Completion}
We shall analyze our algorithm, $\mmc-$c, for the case of $T=1$. Since for $T=1$, $\mmc-$c and $\mmc-$LS are the same, we shall used the word $\mmc$ to refer to both the algorithms when $T=1$. For $T=1$, we have
\begin{align}
\Zhat&=P_r\left(\frac{mn\Xom}{\siom}\right)\\
\ghat&=\LPAV(\Zhatom,\Xom)\\
\Mhat_{i,j}&=\ghat(\Zhat_{i,j}), \forall i=[m], j=[n],
\end{align}
Finally, define the mean squared error (MSE) of our estimate $\Mhat$ can be defined as 
\begin{equation}
\MSE(\Mhat)=\bbE \left[\frac{1}{mn}\sum_{i=1}^n \sum_{j=1}^m (\Mhatij-\Mij)^2\right].
\end{equation}
We are interested in analyzing the $\MSE$ of $\Mhat$ output by $\mmc$ for $T=1$. We shall make the following assumptions
\section{MMC model and technical assumptions}
\begin{itemize}
\item[A1] $\|\Zstar\|=O(\sqrt{n})$, i.e. the spectral norm of $\Zstar$ is of the order of $\sqrt{n}$.
\item[A2.] $\sigma_{r+1}(X)=O(\sqrt{n})$ with probability at least $1-\delta$.
\end{itemize}
The MMC model makes the following assumptions. These assumptions are the same as in the main paper. We enumerate it here for the sake of convenience.
\begin{itemize}
\item[M1.] $X=\Mstar+N$.
\item[M2.] $\bbE N=0$.
\item[M3.] $\Mstarij=\gstar(\Zstarij)~\forall i=[n],j=[m]$.
\item[M4.] Assume that $n\geq m$, and $\rank(\Zstar)=r\ll m$.
\item[M5.] Boundedness assumption: $|\Zstarij|\leq 1,|\Xij|\leq 1$ for all $i\in [n],j\in [m]$.
\item[M6.] $\gstar:\bbR\rightarrow \bbR$ is monotonic and $L$-Lipschitz.
\item[M7.] The set $\Omega$ is generated by sampling uniformly at random with replacement from the index set $[n]\times[m]$.
\end{itemize}
\subsection{Notation}
All of our matrices, unless explicitly stated, will be $n\times m$ with $n\geq m$. $||A||$ is the spectral norm of matrix $A$, and $||A||_{\star}$ is the nuclear norm of matrix $A$. 
\subsection{Towards proof of Theorem~\eqref{thm:main}}
We begin with the following technical lemma that will be used in the proof.
\begin{lemma} 
\label{lem:large_dev}
Let $\cG=\{g|g:[-W,W]\rightarrow [-1,1] ~\text{is monotonic and 1-Lipschitz}\}$.With probability at least $1-\delta$ over the sample $z_1,\ldots,z_n$, the following statement is true for all $g\in \cG$
\begin{equation}
\left|\frac{1}{n}\sum (g(z_i)-y_i)^2-\bbE(g(z)-y)^2\right|=\tilde{O}\left(\sqrt{\frac{W}{n}}\right)
\end{equation}
where $\tilde{O}$ hides logarithmic dependence on $n,W,1/\delta$.
\end{lemma}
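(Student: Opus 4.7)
The plan is to prove the lemma by a standard empirical-process argument: reduce the uniform deviation bound to controlling the Rademacher complexity of $\cG$, then bound that complexity by a Dudley/covering-number argument that exploits the Lipschitz property.

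First I would reduce to Rademacher complexity. Since every $g \in \cG$ takes values in $[-1,1]$ and the $y_i$ can be assumed to be uniformly bounded by a constant (via the MMC boundedness assumption M5 in the main text), the loss $\ell(g;z,y) = (g(z)-y)^2$ is uniformly bounded by some $C = O(1)$ and is $O(1)$-Lipschitz in $g(z)$. Hence changing any one pair $(z_i, y_i)$ changes the empirical risk by at most $C/n$, so McDiarmid's inequality yields, with probability at least $1-\delta$,
\begin{equation*}
\sup_{g \in \cG} \left| \frac{1}{n}\sum_{i=1}^n \ell(g;z_i,y_i) - \bbE\, \ell(g;z,y) \right| \leq \bbE\!\left[\sup_{g \in \cG} \left| \frac{1}{n}\sum_{i=1}^n \ell(g;z_i,y_i) - \bbE\,\ell(g;z,y) \right|\right] + O\!\left(\sqrt{\frac{\log(1/\delta)}{n}}\right).
\end{equation*}
Standard symmetrization plus Talagrand's contraction lemma then bound the expected supremum by $O(R_n(\cG))$, where $R_n(\cG)$ is the expected Rademacher complexity of $\cG$.

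Second, I would bound $R_n(\cG)$ via Dudley's entropy integral, controlling the sup-norm covering number $N(\cG,\epsilon)$ as follows. Partition $[-W,W]$ into $K = \lceil 4W/\epsilon\rceil$ equal intervals; since each $g\in \cG$ is $1$-Lipschitz, $g$ varies by at most $\epsilon/2$ across any such interval, so $g$ is determined to within $\epsilon$ in sup-norm by its values on the $K+1$ grid points, further rounded to a regular $\epsilon/2$-net of $[-1,1]$. This immediately gives $\log N(\cG, \epsilon) = O\!\left(\tfrac{W}{\epsilon}\log\tfrac{1}{\epsilon}\right)$; monotonicity only tightens the combinatorial count (and is not needed for the stated rate). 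Feeding this into Dudley's bound,
\begin{equation*}
R_n(\cG) \;\leq\; \frac{C'}{\sqrt{n}} \int_0^{1} \sqrt{\log N(\cG, \epsilon)}\, d\epsilon \;=\; \tilde{O}\!\left(\sqrt{\frac{W}{n}}\right),
\end{equation*}
because the integral $\int_0^1 \sqrt{\log(1/\epsilon)/\epsilon}\, d\epsilon$ converges (e.g., via the substitution $u=\log(1/\epsilon)$, which turns it into a convergent Gamma-type integral). Combining with the symmetrization step yields the claimed bound.

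The main obstacle is getting the covering-number estimate right: a naive bound that treats the function values at the grid points as combinatorially independent is much too loose, so one must use the $1$-Lipschitz property simultaneously to fix both the spatial resolution ($K = O(W/\epsilon)$ grid points) and the inter-grid oscillation (at most $\epsilon/2$). A smaller secondary issue is verifying that the Dudley integral converges up to log factors so that the final rate really is $\sqrt{W/n}$ (and not $\sqrt{W/n}\cdot\mathrm{polylog}(n,W)$ with divergent dependence); the substitution above handles this, with all extra factors absorbed into the $\tilde{O}$ notation.
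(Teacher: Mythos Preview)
Your proposal is correct and follows essentially the same route as the paper: reduce to Rademacher complexity, bound the $L_\infty$ covering number by $\log N(\cG,\epsilon)=O\!\bigl((W/\epsilon)\log(1/\epsilon)\bigr)$, and apply Dudley's entropy integral. The only cosmetic differences are that the paper cites the covering-number bound from Kakade et al.\ rather than giving the grid construction, and it handles the Dudley integral by over-bounding $\sqrt{\log(1/\epsilon)/\epsilon}$ by $1/\epsilon$, truncating at a lower limit $\alpha>0$, and optimizing over $\alpha$, whereas you observe (correctly) that $\int_0^1 \sqrt{\log(1/\epsilon)/\epsilon}\,d\epsilon$ already converges.
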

\begin{proof}
Let $\hat{\cR}_n(\cG)$ be the empirical Rademacher complexity of function class $\cG$, and let $\cN_{\infty}(\epsilon,\cG)$ be the $L_{\infty}$ covering number of the function clas $\cG$. From~\citep[Lemma 6]{kakade2011efficient} we know that
\begin{equation}
\cN_{\infty}(\epsilon,\cG)\leq \frac{1}{\epsilon}2^{\frac{2W}{\epsilon}}.
\end{equation}
The above covering number allows us to bound the empirical Rademacher complexity of the function class $\cG$ via Dudley's entropy bound. Using~\citep[Lemma A.3]{srebro2010smoothness}, and the fact that $\cN_{\infty}(\epsilon,\cG)\geq \cN_2(\epsilon,\cG,z_1,\ldots,z_n)$ we get
\begin{align}
\hat{\cR}_n(\cG)&\leq \inf_{\alpha\geq 0} 4\alpha+10\int_{\alpha}^1 \sqrt{\frac{\log \cN_{\infty}(\epsilon,\cG)}{n}}~\mathrm{d}\epsilon\\
&\leq 4\alpha+10\int_{\alpha}^1 \sqrt{\frac{\frac{2W}{\epsilon}\log(\frac{1}{\epsilon})}{n}}~\mathrm{d}\epsilon\\
&\leq 4\alpha+10\sqrt{\frac{2W}{n}}\int_{\alpha}^1 \frac{1}{\epsilon}~\mathrm{d}\epsilon\\
&\leq 10\sqrt{\frac{2W}{n}}\log\left(\frac{4e}{10}\sqrt{\frac{n}{2W}}\right).
\end{align}
Using a uniform convergence bound in terms of the Rademacher complexity of the function class~\citep[Theorem 8]{bartlett2003rademacher} we get the desired result.
\end{proof}
\begin{lemma}
\label{lem:eps_delta}
Let $\epsilon_2=\bbE [\frac{1}{mn}\sum_{i,j}(\Zhatij-\Zstarij)^2]$. Then, under assumptions A1-A8, we have 
\begin{equation*} 
\MSE(\Mhat)\leq O\left(\frac{\sqrt{mn\log(n)}}{\siom}+\sqrt{\frac{n}{\siom}}+\frac{mn}{\siom^{3/2}}+\frac{\sqrt{mn}}{\siom}+\epsilon_2+\sqrt{\epsilon_2}\right)
\end{equation*}
\end{lemma}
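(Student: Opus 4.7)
The plan is to start from the entry-wise decomposition
\begin{equation*}
\Mhatij-\Mij = \bigl[\ghat(\Zhatij)-\gstar(\Zhatij)\bigr] + \bigl[\gstar(\Zhatij)-\gstar(\Zstarij)\bigr],
\end{equation*}
bound the second bracket by $L|\Zhatij-\Zstarij|$ via the $L$-Lipschitz property of $\gstar$, and handle the first bracket by combining LPAV optimality with the uniform-convergence bound of Lemma~\ref{lem:large_dev}. Expanding $(a+b)^2=a^2+2ab+b^2$ and applying Cauchy--Schwarz to the cross term yields
\begin{equation*}
\MSE(\Mhat)\le T_1 + 2L\sqrt{T_1\,\epsilon_2}+ L^2\epsilon_2,
\end{equation*}
where $T_1\defeq \bbE[\tfrac{1}{mn}\sum_{i,j}(\ghat(\Zhatij)-\gstar(\Zhatij))^2]$. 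Under the boundedness assumption M5, $T_1=O(1)$, so the cross term is $O(\sqrt{\epsilon_2})$ and the Lipschitz term is $O(\epsilon_2)$, accounting for the two $\epsilon_2$-dependent summands in the claimed bound. The remaining work is to bound $T_1$ by the noise/sample-size terms.

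To control $T_1$, I would work conditionally on $\Zhat$. Because $\Omega$ is sampled uniformly with replacement, conditioning on $\Zhat$ makes $\{(\Zhatij,\Xij)\}_{(i,j)\in\Omega}$ an i.i.d.\ uniform sample, and Lemma~\ref{lem:large_dev}, applied uniformly over the class $\cG$ of $L$-Lipschitz monotonic functions, converts the population average $T_1$ into the empirical $\tfrac{1}{\siom}\sum_\Omega(\ghat(\Zhatij)-\gstar(\Zhatij))^2$ up to a Rademacher-type deviation of order $\tilde O(\sqrt{W/\siom})$, where $W$ bounds the range of $\Zhat$. Assumption~A2 together with Weyl's inequality controls $\|\Zhat\|$, and a Frobenius-to-sup estimate on the rank-$r$ projection then controls $W$, producing the $\sqrt{n/\siom}$ contribution. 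I then invoke LPAV optimality,
\begin{equation*}
\sum_\Omega(\ghat(\Zhatij)-\Xij)^2 \le \sum_\Omega(\gstar(\Zhatij)-\Xij)^2,
\end{equation*}
substitute $\Xij=\gstar(\Zstarij)+N_{i,j}$, apply $(a+b+c)^2\le 3(a^2+b^2+c^2)$ together with the Lipschitz bound on $\gstar$, and obtain an upper bound on the empirical average in terms of $\tfrac{L^2}{\siom}\sum_\Omega(\Zhatij-\Zstarij)^2$ and $\tfrac{1}{\siom}\sum_\Omega N_{i,j}^2$. The first contributes $O(\epsilon_2)$ in expectation by uniformity of the sampling; the second, via sub-Gaussian concentration of $\|N\|$ and $\|N\|_F$, supplies the three noise-driven terms $\sqrt{mn\log n}/\siom$, $\sqrt{mn}/\siom$, and $mn/\siom^{3/2}$.

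The principal obstacle is the joint dependence of $\Zhat$ and $\ghat$ on the same random index set $\Omega$: the data fitted by LPAV and the empirical process controlled by Lemma~\ref{lem:large_dev} share the same randomness, so a naive i.i.d.\ argument is not immediate. I would address this by conditioning on $\Zhat$ throughout, leveraging the \emph{uniformity} of Lemma~\ref{lem:large_dev} over $g\in\cG$ so that it still applies to the data-dependent $\ghat$, and then using the unbiasedness identity $\bbE_\Omega[\tfrac{1}{\siom}\sum_\Omega f_{i,j}]=\tfrac{1}{mn}\sum_{i,j}f_{i,j}$ under the outer expectation to translate empirical sums of $\Zhat$- and $\Zstar$-dependent quantities into $\epsilon_2$. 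A secondary subtlety is controlling $W$: the spectral projection $\cP_r$ preserves the operator-norm control from A2 but can inflate individual entries, so one has to argue carefully (e.g., via $\|\Zhat\|_\infty\le\sqrt{r}\,\|\Zhat\|$) to keep $W=\tilde O(\sqrt n)$ and make the $\sqrt{n/\siom}$ bound tight.
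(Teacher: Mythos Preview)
Your decomposition and the overall architecture match the paper's, but the route you sketch for $T_1$ has a genuine gap that prevents you from reaching the stated $o(1)$ rates. The paper does \emph{not} use only the weak LPAV inequality $\sum_\Omega(\ghat-X)^2\le\sum_\Omega(\gstar-X)^2$; it uses the \emph{generalized Pythagorean inequality} for projection onto the convex set $\cG$,
\[
\sum_\Omega(\ghat(\Zhatij)-\Xij)^2+\sum_\Omega(\ghat(\Zhatij)-\gstar(\Zhatij))^2\le\sum_\Omega(\Xij-\gstar(\Zhatij))^2,
\]
so that the empirical version of $T_1$ is bounded by a \emph{difference} rather than a single squared-error term. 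The paper then adds and subtracts population counterparts so that one piece becomes $I_2=\bbE[\tfrac{1}{mn}\sum(X-\gstar(\Zstar))^2]-\bbE[\tfrac{1}{mn}\sum(X-\ghat(\Zhat))^2]\le 0$, by Bayes optimality of $\gstar(\Zstar)=\bbE X$. This cancellation is what removes the irreducible noise variance. In your route, after invoking only weak optimality and substituting $\Xij=\gstar(\Zstarij)+N_{i,j}$, you are left with $\tfrac{1}{\siom}\sum_\Omega N_{i,j}^2$; since $\Omega$ is independent of $N$, its expectation is the per-entry noise variance, a $\Theta(1)$ constant. No concentration of $\|N\|$ or $\|N\|_F$ turns this into the claimed vanishing terms, so your bound on $T_1$ stalls at $O(1)$.

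Two secondary points. First, the four sample-size terms in the lemma do not come from $N$; they come from controlling $|\Zhat|_\infty$. The paper shows $|\Zhat|_\infty\le 2\|A-X\|+1+\sigma_{r+1}(X)$ with $A=\tfrac{mn}{\siom}X_\Omega$, bounds $\|A-X\|$ by matrix Bernstein (this is where $mn/\siom$ and $\sqrt{mn/\siom}$ enter), uses A2 for $\sigma_{r+1}(X)=\tilde O(\sqrt n)$, and then feeds this range $W$ into Lemma~\ref{lem:large_dev} and a Hoeffding argument to produce $\sqrt{n/\siom}$, $mn/\siom^{3/2}$, $\sqrt{mn}/\siom$, and $\sqrt{mn\log n}/\siom$. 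Your suggested $|\Zhat|_\infty\le\sqrt r\,\|\Zhat\|$ does not help, since $\|\Zhat\|=\sigma_1(A)$ is not controlled by A2. Second, ``conditioning on $\Zhat$'' does not leave $\Omega$ uniform, so the unbiasedness identity you invoke fails for $\Zhat$-dependent summands; the paper instead conditions on the high-probability event $\cE_1=\{\|A-X\|\le\cdots\}$ and exploits the uniformity of Lemma~\ref{lem:large_dev} over $g\in\cG$, which is the correct way to decouple the data-dependent $\ghat$ and $\Zhat$ from $\Omega$.
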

\begin{proof}
\begin{align}
\frac{1}{mn}\bbE \left[\sum_{i,j}(\Mhatij-\Mstarij)^2\right]&=\frac{1}{mn}\bbE \left[\sum_{i,j}(\ghat(\Zhatij)-\gstar(\Zstarij))^2\right]\\
&=\frac{1}{mn}\bbE \left[\sum_{i,j}\left(\ghat(\Zhatij)-\gstar(\Zhatij)+\gstar(\Zhatij)-\gstar(\Zstarij)\right)^2\right]\\
&\leq2\underbrace{\bbE \left[\frac{1}{mn}\sum_{i,j}\left(\ghat(\Zhatij)-\gstar(\Zhatij)\right)^2\right]}_{T_1}+2\underbrace{\bbE\left[\frac{1}{mn}\left(\gstar(\Zhatij)-\gstar(\Zstarij)\right)^2\right]}_{T_2}\\
&=2T_1+2T_2.
\end{align}
We shall bound $T_2$ in terms of $\epsilon_2$.

\textbf{Bounding $T_2$:} 
\begin{align}
T_2&=\frac{1}{mn}\bbE \sum_{i,j} (\gstar(\Zhatij)-\gstar(\Zstarij))^2\\
&\leqa\frac{1}{mn}\bbE \sum_{i,j}(\Zhatij-\Zstarij)^2\\
&\defeq \epsilon_2
\end{align}
where inequality (a) follows from the fact that $\gstar$ is 1-Lipschitz.
Next we shall bound $T_1$ in terms of $\epsilon_2$ and other terms.

\textbf{Bounding $T_1$:} 
\begin{equation}
\begin{split}
\bbE \left[\frac{1}{mn}\sum_{i,j}\left(\ghat(\Zhatij)-\gstar(\Zhatij)\right)^2\right]&= \underbrace{\bbE\left[\frac{1}{\siom}\sum_{\Omega}\left(\ghat(\Zhatij)-\gstar(\Zhatij)\right)^2\right]}_{T_{1,1}}+\\
&\quad \underbrace{\bbE \left[\frac{1}{mn}\sum_{i,j}\left(\ghat(\Zhatij)-\gstar(\Zhatij)\right)^2\right]-\bbE\left[\frac{1}{\siom}\sum_{\Omega}\left(\ghat(\Zhatij)-\gstar(\Zhatij)\right)^2\right]}_{\Delta_1}
\end{split}
\end{equation}
Next we shall bound $T_{1,1}$  as follows. Let $\cG=\{g:\bbR\rightarrow \bbR| g \text{~is a monotonic  and 1-Lipschitz function~}\}$. Since $\ghat,\gstar$ by definition belong to $\cG$, and since $\ghat$ solves the optimization problem
\begin{equation}
\ghat=\arg\min \sum_{\Omega} (g(\Zhatij)-\Xij)^2,
\end{equation}
hence via the generalized Pythagorean inequality~\cite{cesa2006prediction} we have 
\begin{equation}
\label{eqn:gpt}
\sum_{\Omega} (\ghat(\Zhatij)-\Xij)^2+\sum_{\Omega}(\ghat(\Zhatij)-\gstar(\Zhatij))^2\leq \sum_{\Omega} (\Xij-\gstar(\Zhatij))^2.
\end{equation}
Using Equation~\eqref{eqn:gpt} we can bound $T_{1,1}$ as follows
\begin{equation}
\begin{split}
T_{1,1}&=\bbE\left[\frac{1}{\siom}\sum_{\Omega}\left(\ghat(\Zhatij)-\gstar(\Zhatij)\right)^2\right]\\
&\leq \bbE\left[\frac{1}{\siom}\sum_{\Omega}(\Xij-\gstar(\Zhatij))^2-\frac{1}{\siom}\sum_{\Omega}\left(\Xij-\ghat(\Zhatij)\right)^2\right]\\
&=\underbrace{\bbE\left[\frac{1}{\siom}\sum_{\Omega}(\Xij-\gstar(\Zhatij))^2\right]- \bbE\left[\frac{1}{\siom}\sum_{\Omega}(\Xij-\gstar(\Zstarij))^2\right]}_{I_1}+\\
&\quad\underbrace{\bbE\left[\frac{1}{mn}\sum_{i,j}(\Xij-\gstar(\Zstarij))^2\right]-\bbE\left[\frac{1}{mn}\sum_{i,j}(\Xij-\ghat(\Zhatij))^2\right]}_{I_2}+\\
&\quad \underbrace{\bbE\left[\frac{1}{\siom}\sum_{\Omega}(\Xij-\gstar(\Zstarij))^2\right]- \bbE\left[\frac{1}{mn}\sum_{i,j}(\Xij-\gstar(\Zstarij))^2\right]}_{I_3}+\\
&\quad \underbrace{\bbE\left[\frac{1}{mn}\sum_{i,j}(\Xij-\ghat(\Zhatij))^2\right]- \bbE\left[\frac{1}{\siom}\sum_{\Omega}(\Xij-\ghat(\Zhatij))^2\right]}_{I_4} 
\end{split}
\end{equation}
We shall look at each of the terms $T_2,T_3,T_4$ and bound them separately. 
From assumption A1 we know that $\gstar(\Zstarij)$ is the best estimator of $\Xij$ in mean squared. Hence, $I_2\leq 0$. We next bound $I_1,I_3,I_4$. 
$|\Zstar|_{\infty}\leq 1$, and $|\Xstar|_{\infty}\leq 1$, hence $|\Xij-\gstar(\Zstarij)|\leq 2$. If we call $\Delta_3$ the random variable whose expectation is $I_3$, then $\Delta_3\leq 4$ surely. Moreover we can apply lemma~\eqref{lem:large_dev} to guarantee that $\Delta_3\leq O\left(\sqrt{\frac{\log(\siom/\delta)}{\siom}}\right)$ with probability at least $1-\delta$. Choose $\delta=\frac{1}{\sqrt{\siom}}$. We then have 
\begin{equation}
I_3=\bbE\Delta_3\leq 4\delta+(1-\delta)O\left(\sqrt{\frac{\log(\siom/\delta)}{\siom}}\right)=O\left(\sqrt{\frac{\log(\siom)}{\siom}}\right).
\end{equation}
Next, we bound $I_4$. This needs a slightly careful treatment, since $\Zhatij$ is random. Let $A=\frac{1}{p\siom}X\circ \Delta$. Let $A=\sum \sigma_i u_iv_i^\top$ be the SVD of $A$ with $\sigma_1\geq \sigma_2\geq \cdots \sigma_m$. By definition $\Zhat=P_r (A)$. Hence, $A-Z=\sum_{i\geq r+1}\sigma_i u_i v_i^\top$. This means that
\begin{align}
|A-\Zhat|_{\infty}&\leq ||A-\Zhat||\nonumber\\
&=||\sum_{i\geq r+1} \sigma_i u_iv_i^\top||\nonumber\\
&=\sigma_{r+1}\nonumber\\
&\leq \sigma_{1}(A-X)+\sigma_{r+1}(X)\label{eqn:this15}
\end{align}
We shall now use the above bound on $\Zhat-A$ to obtain upper bound on $|\Zhat|_{\infty}$ as follows
\begin{align}
|\Zhat|_{\infty}&\leqa |\Zhat-A|_{\infty}+|A-X|_{\infty}+|X|_{\infty}\\
&\leqb ||A-X||+|X|_{\infty}+||A-X||+\sigma_{r+1}(X)\\
&= 2||A-X||+|X|_\infty+\sigma_{r+1}(X)\\
&= 2||A-X||+1+\sigma_{r+1}(X)\\
&\leqc 2||A-X||+1+\sigma_{r+1}(X)\label{eqn:this16}
\end{align}
To obtain inequality (a) we used the triangle inequality, and to obtain inequality (b) we used Equation~\eqref{eqn:this15}. Now, consider the event
\begin{equation}
\cE_1=\left\{||A-X||\leq \frac{2mn\log\left(\frac{m+n}{\delta}\right)}{3\siom}+\sqrt{\frac{2\log(\frac{m+n}{\delta})mn}{\siom}}\right\}.
\end{equation}
From Lemma~\ref{lem:beta_2} we know that conditioned on $X$, $\bbP (\cE_1)\geq 1-\delta$ over the randomness in $\Omega$.  Using equation~\eqref{eqn:this16} we get that on event $\cE_1$
\begin{equation}
\label{eqn:zhat_inf}
|\Zhat|_{\infty}=O\left(\sigma_{r+1}(X)+\frac{mn\log(\frac{m+n}{\delta})}{\siom}+\sqrt{\frac{mn\log((m+n)/\delta)}{3\siom}}\right)\defeq b
\end{equation}
Now let $I_4'$ be the term argument to the expectation operator in $I_4$.
Now let us define another event 
\begin{equation}
\cE_{11}=\left\{I_4'\leq \sqrt{\frac{b\log((m+n)/\delta)}{\siom}}\right\}
\end{equation}
 Using lemma~\eqref{lem:large_dev}, we get that $\bbP (\cE_{11})\geq 1-\delta$ over random choice of $\Omega$. Notice that $I_4'\leq 4$ surely. We are now ready to calculate $I_4$ as follows
 \begin{align}
 I_4&=\bbE_X \bbE_{\Omega|X} I_4'\\
&\leq\bbE_X \bbP(\cE_1)\bbE_{\Omega|X,\cE_1}I_4'+4\bbP(\bar{\cE_1})\\
&\leq\bbE_X \bbP(\cE_1)(\bbP(\cE_{11})I_4'+4\bbP(\bar{\cE}_{11}))+4\bbP(\bar{\cE_1})\\
&\leq 8\delta+\bbE_X\sqrt{\frac{b\log((m+n)/\delta)}{\siom}}
 \end{align}
Substituting the value of $b$, and using $\delta=\frac{1}{\siom}$, and using the assumption that $\sigma_{r+1}(X)\leq O(\sqrt{n})$ with high probability, we get that
\begin{align}
I_4&= \bbE_{X}\bbE_{\Omega|X} I_4'\\
&\leq 8\delta+\bbE_X \sqrt{\frac{1}{\siom}O\left(\sigma_{r+1}(X)+\frac{mn\log((m+n)\siom)}{\siom}+\sqrt{\frac{mn\log((m+n)\siom)}{3\siom}}\right)}\\
&\leq O\left(\sqrt{\frac{mn}{\siom^2}\log^2\left((m+n)\siom\right)}\right)
\end{align}

Notice that $\Delta_1$ uses $\ghat-\gstar$ which is a 2 Lipchitz function. By perfoming a similar analysis as in $I_4$ it is easy to show that $\Delta_1=O(I_4)$.

\textbf{Bounding $I_1$}.
\begin{align}
I_1&=\bbE\left[\frac{1}{\siom}\sum_{\Omega}(\Xij-\gstar(\Zhatij))^2- \frac{1}{\siom}\sum_{\Omega}(\Xij-\gstar(\Zstarij))^2\right]\\
&=\bbE\left[\frac{1}{\siom}\sum_{\Omega} (\gstar(\Zstarij)-\gstar(\Zhatij))(2\Xij-\gstar(\Zhatij)-\gstar(\Zstarij))\right]\\
&\leqa 4\bbE \frac{1}{\siom}|\gstar(\Zstarij)-\gstar(\Zhatij)\|\\
&\leqb 4\bbE \frac{1}{\siom}\sum_{\Omega}|\Zstarij-\Zhatij|\\
&= 4\bbE \frac{1}{mn}\sum_{i,j}|\Zstarij-\Zhatij|+4\underbrace{\left(\bbE \frac{1}{\siom}\sum_{\Omega}|\Zstarij-\Zhatij|-\bbE \frac{1}{mn}\sum_{i,j}|\Zstarij-\Zhatij|\right)}_{\Delta_5}\\
&\leqc 4\bbE \frac{1}{mn}\sum_{i,j}|\Zstarij-\Zhatij|+4\Delta_5\\
&\leqd 4\sqrt{\bbE \frac{1}{mn}\sum_{i,j}|\Zstarij-\Zhatij|^2}+4\Delta_5
=4(\sqrt{\epsilon_2}+\Delta_5)
\end{align}
where, to get inequality (a) we used the fact that $|\Xij|\leq 1$ and $|\gstar|\leq 1$. To get inequality (b) we used the fact that $\gstar$ is $1$ Lipschitz. To get inequality (c) we used concentration of measure. Finally, to get inequality (d) we used Jensen's inequality to bound $\bbE |x|\leq \sqrt{E x^2}$.
Our next step is to bound $\Delta_5$.

\textbf{Bounding $\Delta_5$:}
The idea is to consider the event $\cE_1$ as done during bounding the term $I_4$. Once again we shall consider the event
\begin{equation}
\cE_1=\left\{||A-X||\leq \frac{2mn\log\left(\frac{m+n}{\delta}\right)}{3\siom}+\sqrt{\frac{2\log(\frac{m+n}{\delta})mn}{\siom}}\right\}.
\end{equation}
Similar to arguments there, we know from Equation~\eqref{eqn:zhat_inf} that on event $\cE_1$
\begin{equation*}
|\Zhat|_{\infty}=O\left(\sigma_{r+1}(X)+\frac{mn\log(\frac{m+n}{\siom})}{\siom}+\sqrt{\frac{mn\log((m+n)/\delta)}{3\siom}}\right)\defeq b
\end{equation*}
Consider the collection of random variables $\xi_1,\ldots,\xi_{\siom}$, where each $\xi_k$ takes the value $\Zstarij-\Zhatij$, where $(i,j)$ is chosen u.a.r. with replacement from $[n]\times [m]$. It is easy to see that each of $\xi_k\in[0,b+1]$ on $\cE_1$. Applying Hoeffding inequality we get  on $\cE_1$ with probability at least $1-\delta$ over the random choice of $\Omega$, and on event $\cE_1$
\begin{equation}
\frac{1}{\siom}\sum_{\Omega}|\Zstarij-\Zhatij|-\sum_{i,j}|\Zstarij-\Zhatij|\leq \sqrt{\frac{(b+1)^2}{2\siom}\log(1/\delta)}
\end{equation}
By arguments similar to the ones used in establishing bounds for $I_4$, we get
\begin{align}
\Delta_5\leq O\left(\log\left((m+n)\siom\right)\left(\sqrt{\frac{n}{\siom}}+\frac{mn}{\siom^{3/2}}+\frac{\sqrt{mn}}{\siom}\right)\right).
\end{align}
This concludes our first set of calculations. With this we have
\begin{equation}
\MSE(\Mhat)=O\left(\frac{\sqrt{mn\log(n)}}{\siom}+\sqrt{\frac{n}{\siom}}+\frac{mn}{\siom^{3/2}}+\frac{\sqrt{mn}}{\siom}+\epsilon_2+\sqrt{\epsilon_2}\right)
\end{equation}
\end{proof}
The rest of the proof establishes upper bounds on $\epsilon_2$.

\subsection{Bounding $\epsilon_2$.} In order to establish upper bound on $\epsilon_2$ we first need the following projection lemma. This lemma is similar in spirit to a lemma of S.Chatterjee~\citep[Lemma 3.5]{chatterjee2014matrix}. Before we establish this lemma, we would like to clarify the notation that we use. Given a matrix $A\in \bbR^{n\times m}$,  with $m\leq n$, denote by $\sigma_1(A)\geq \sigma_2(A)\geq \ldots \geq \sigma_m(A)$ the singular values of $A$ in decreasing order.
\begin{lemma}
\label{lem:proj}
Let $A=\sum_{i=1}^m \sigma_i x_iy_i^\top$ be the SVD of a rectangular matrix $A\in \bbR^{m\times n}$, with the singular values $\sigma_1\geq \sigma_2\ldots \geq \sigma_m$ arranged in decreasing order. Let $B$ be an unknown $m\times n$ matrix. Given $1\leq r\leq m$, let $\hat{B}\defeq P_{r}(A)\defeq \sum_{i=1}^r \sigma_i x_i y_i^\top$ be the projection estimator of $B$. Then,
\begin{equation}
||P_r(A)-B||_F\leq \sqrt{||B||_{\star} (\sigma_{r+1}+||A-B||)}+2\sqrt{2r}(\sigma_{r+1}+||A-B||).
\end{equation}
\end{lemma}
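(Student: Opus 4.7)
\textbf{Proof proposal for Lemma~\ref{lem:proj}.}
The plan is to split the error via the rank-$r$ truncation of $B$ itself:
\[
\|P_r(A) - B\|_F \;\leq\; \|P_r(A) - P_r(B)\|_F \;+\; \|P_r(B) - B\|_F,
\]
and to bound the two pieces by two qualitatively different mechanisms: the first will use that $P_r(A) - P_r(B)$ has rank at most $2r$, so Frobenius norm is controlled by $\sqrt{2r}$ times spectral norm; the second will use that $\|P_r(B) - B\|_F^2 = \sum_{i>r}\sigma_i(B)^2$, which can be controlled by $\sigma_{r+1}(B)$ times $\|B\|_\star$.

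For the second (nuclear-norm) term, first observe the elementary inequality
\[
\|P_r(B) - B\|_F^2 \;=\; \sum_{i>r}\sigma_i(B)^2 \;\leq\; \sigma_{r+1}(B)\sum_{i>r}\sigma_i(B) \;\leq\; \sigma_{r+1}(B)\,\|B\|_\star,
\]
and then invoke Weyl's inequality to pass from $\sigma_{r+1}(B)$ to $\sigma_{r+1}(A)$, giving
\[
\sigma_{r+1}(B) \;\leq\; \sigma_{r+1}(A) + \|A-B\|.
\]
Taking square roots yields the first summand $\sqrt{\|B\|_\star(\sigma_{r+1}+\|A-B\|)}$ in the claimed bound. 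This step is completely mechanical.

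For the first (spectral-norm) term, $P_r(A)$ and $P_r(B)$ each have rank at most $r$, so the difference has rank at most $2r$, hence
\[
\|P_r(A) - P_r(B)\|_F \;\leq\; \sqrt{2r}\,\|P_r(A) - P_r(B)\|.
\]
Then I would apply the triangle inequality in spectral norm through the intermediate matrices $A$ and $B$:
\[
\|P_r(A) - P_r(B)\| \;\leq\; \|P_r(A) - A\| + \|A - B\| + \|B - P_r(B)\| \;=\; \sigma_{r+1}(A) + \|A-B\| + \sigma_{r+1}(B),
\]
and use Weyl once more to get the uniform bound $2(\sigma_{r+1}+\|A-B\|)$. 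This gives the second summand $2\sqrt{2r}(\sigma_{r+1}+\|A-B\|)$. Adding the two bounds completes the argument.

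The only real choice is the intermediate splitting point $P_r(B)$; once that is in place every step is a one-line application of Weyl's inequality, the best low-rank approximation property of $P_r$, or the rank-times-spectral-norm bound on Frobenius norm. I do not anticipate a genuine obstacle, but the most delicate point is making sure the $\sqrt{2r}$ constant arises correctly from the rank-$2r$ bound on $P_r(A)-P_r(B)$ rather than a looser $\sqrt{r}$ bound that would only apply to each summand separately.
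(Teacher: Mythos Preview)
Your proposal is correct and follows essentially the same argument as the paper: the paper also splits through $G\defeq P_r(B)$, bounds $\|G-B\|_F^2\le \sigma_{r+1}(B)\|B\|_\star$ with Weyl to replace $\sigma_{r+1}(B)$ by $\sigma_{r+1}(A)+\|A-B\|$, and bounds $\|P_r(A)-G\|_F\le\sqrt{2r}\,\|P_r(A)-G\|$ via the same three-term spectral triangle inequality through $A$ and $B$. The steps, constants, and use of Weyl's inequality all match.
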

\begin{proof}
Let $B=\sum_{i=1}^m \tau_i u_iv_i^\top$ be the SVD of $B$ with $\tau_1\geq \tau_2\geq \ldots \tau_m$.
\begin{equation}
\label{eqn:this1}
||\hat{B}-B||_F\leq ||\Bhat-G||_F+||G-B||_F,
\end{equation}
and 
\begin{equation}
\label{eqn:this2}
||G-B||_F^2 = ||\sum_{i\geq r+1} \tau_i u_iv_i^\top||_F^2=\sum_{i\geq r+1} \tau_i^2\leq (\max_{i\geq r+1}\tau_i)||B||_\star.
\end{equation}
Let $\delta_1\geq \delta_2\geq \ldots$ be the singular values of $A-B$ in decreasing order. Then from Weyl's inequality we know that
\begin{equation}
\max_i |\sigma_i-\tau_i|\leq \max_i \delta_i=||A-B||.
\end{equation}
Hence, for $i\geq r+1$, 
\begin{equation}
\label{eqn:this3}
\tau_i\leq \sigma_i+||A-B||\leq \sigma_{r+1}(A)+||A-B||.
\end{equation}
This allows us to conclude that $\max_{i\geq r+1} \tau_i\leq \sigma_{r+1}(A)+||A-B||$. Combined with Equation~\eqref{eqn:this2} we get
\begin{equation}
\label{eqn:this3.5}
||G-B||_F^2\leq ||B||_\star(\sigma_{r+1}(A)+||A-B||).
\end{equation}
Next, we shall upper bound the quantity $||\Bhat-G||_F$. By construction, both $\Bhat$ and $G$ are rank $r$ matrices and hence $\Bhat-G$ is a rank $2r$ matrix. This allows us to control the Frobenius norm of $\Bhat-G$ via its spectral norm as follows
\begin{equation}
\label{eqn:this4}
||\Bhat-G||_F\leq \sqrt{2r}||\Bhat-G||
\end{equation}
To bound $||\Bhat-G||$ consider the following decomposition
\begin{equation}
\label{eqn:this5}
||\Bhat-G||\leq ||\Bhat-A||+||A-B||+||B-G||.
\end{equation}
We have
\begin{equation}
\label{eqn:this6}
||\Bhat-A||=||\sum_i \sigma_i x_iy_i^\top||\leq \sigma_{r+1}.
\end{equation}
\begin{equation}
\label{eqn:this7}
||B-G||=||\sum_{i\geq r+1} \tau_i u_i v_i^\top||=\tau_{r+1}\leqa \sigma_{r+1}+||A-B||
\end{equation}
where to get inequality (a) we used Equation~\eqref{eqn:this3}. Combining Equations~\eqref{eqn:this5},~\eqref{eqn:this6},~\eqref{eqn:this7} we get
\begin{equation}
||\Bhat-G||\leq \sigma_{r+1}+||A-B||+\sigma_{r+1}+||A-B||=2(\sigma_{r+1}+||A-B||)
\end{equation}
and using Equation~\eqref{eqn:this4} we get 
\begin{equation}
\label{eqn:this8}
||\Bhat-G||_F\leq 2\sqrt{2r}(\sigma_{r+1}+||A-B||)
\end{equation}
Finally using Equation~\eqref{eqn:this3.5} and Equation~\eqref{eqn:this8} we get 
\begin{equation}
||\Bhat-B||_F\leq 2\sqrt{2r}(\sigma_{r+1}+||A-B||)+\sqrt{||B||_\star(\sigma_{r+1}+||A-B||)}.
\end{equation}
\end{proof}
In order to obtain an upper bound on $\epsilon_2$ we shall use Lemma~\eqref{lem:proj} with the following choices for matrices $A,B$:
$
A\defeq\frac{1}{p\siom}X\circ \Deltaom, B\defeq\Zstar, \Zhat\defeq P_r(A).
$. We then get 
\begin{equation}
||\Zhat-\Zstar||_F\leq \sqrt{||\Zstar||_{\star} (\sigma_{r+1}+||A-\Zstar||)}+2\sqrt{2r}(\sigma_{r+1}+||A-\Zstar||).
\end{equation}
Since $\Zstar$ is of rank $r$, we have $||\Zstar||_\star\leq r||\Zstar||$. From triangle inequality $||A||\leq ||A-\Zstar||+||\Zstar||$. These facts coupled with the fact that $\sigma_{r+1}\leq \sigma_1$ allows us to obtain 
\begin{equation}
\label{eqn:this9}
\bbE ||\Zhat-\Zstar||_F^2 \leq r||\Zstar||(2\bbE||A-\Zstar||+||\Zstar||)+8r(4\bbE||A-\Zstar||^2+||\Zstar||^2).
\end{equation}
Notice that $\epsilon_2$ is a scaled version of $\bbE ||\Zhat-\Zstar||_F^2$. Let,
\begin{align}
\beta_1&\defeq \bbE||A-X||\\
\beta_2&\defeq \bbE||A-X||^2.
\end{align}
Using the above definitions, Equation~\eqref{eqn:this9}, the triangle inequality $||A-\Zstar||\leq ||A-X||+||X-\Zstar||$, along with the elementary fact that $(a+b)^2\leq 2a^2+2b^2$, we obtain
\begin{align}
\label{eqn:pivot}
\bbE ||\Zhat-\Zstar||_F^2 &\leq r||\Zstar||(2\beta_1+2\bbE||X-\Zstar||+||\Zstar||)+8r(8\beta_2+8\bbE||X-\Zstar||^2+||\Zstar||^2)\\
&=r||\Zstar||(2\bbE ||X-\Zstar||+||\Zstar||)+8r(8\bbE ||X-\Zstar||^2+||\Zstar||^2)+r(2\beta_1+64\beta_2).
\end{align}
\textbf{Bounding $\beta_1,\beta_2$}. 
In order to bound $\beta_1,\beta_2$ we need upper bounds on spectral norm of sums of random matrices. Towards this, the following Bernstein inequality is useful
\begin{thm}[Bernstein's inequality]  
\label{thm:bernstein}
Let $S_1,\ldots S_k$ be independent, centered random matrices with common dimension $n\times m$, and assume that each one of them is bounded
\begin{equation}
||S_j||\leq L \text{~for each~} j\geq 1.
\end{equation}
Let $M=\sum_{j=1}^k S_j$, and let $\nu(M)$ denote the matrix variance statistic of the sum 
\begin{equation}
\nu(M)=\max\Bigl\{||\sum_{j=1}^k \bbE S_jS_j^\top||,||\sum_{j=1}^k \bbE S_j^\top S_j||\Bigr\}. 
\end{equation}
Then
\begin{enumerate}
\item 
\begin{equation}
\bbP(||M||\geq t)\leq (m+n)\exp\left(\frac{-t^2/2}{\nu(M)+Lt/3}\right),
\end{equation}
Furthermore 
\item \begin{equation}
\bbE Z\leq \sqrt{2\nu(M)\log(m+n)}+\frac{1}{3}L\log(m+n).
\end{equation}
\end{enumerate}
\end{thm}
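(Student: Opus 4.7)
The plan is to prove the matrix Bernstein inequality via the now-standard Ahlswede--Winter / Tropp Laplace-transform method, starting with a reduction from the rectangular case to the Hermitian case through dilation. Define the Hermitian dilation
\begin{equation*}
\widetilde{S}_j \defeq \begin{bmatrix} 0 & S_j \\ S_j^{\top} & 0 \end{bmatrix} \in \bbR^{(m+n)\times(m+n)},
\qquad \widetilde{M} \defeq \sum_{j=1}^k \widetilde{S}_j.
\end{equation*}
A direct computation shows $\lambda_{\max}(\widetilde{S}_j)=\|S_j\|$, $\|\widetilde{S}_j\|\leq L$, $\bbE\widetilde{S}_j = 0$, and $\widetilde{S}_j^2 = \diag(S_jS_j^\top, S_j^\top S_j)$, so that $\lambda_{\max}\!\bigl(\sum_j \bbE\widetilde{S}_j^2\bigr) = \nu(M)$. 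Hence $\|M\| = \lambda_{\max}(\widetilde{M})$, and it suffices to prove the Bernstein bound for the largest eigenvalue of a sum of independent, centered, bounded Hermitian matrices of dimension $d\defeq m+n$.

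Next I would invoke the matrix Laplace transform bound: for any $\theta>0$,
\begin{equation*}
\bbP\bigl(\lambda_{\max}(\widetilde{M}) \geq t\bigr) \;\leq\; e^{-\theta t}\, \bbE\,\Tr\,\exp(\theta \widetilde{M}),
\end{equation*}
which follows from Markov's inequality applied to $\lambda_{\max}(e^{\theta \widetilde M})$ and the inequality $\lambda_{\max}(A)\leq \Tr A$ for PSD $A$. The crux is then subadditivity of the matrix cumulant generating function: by Lieb's concavity theorem (concavity of $A\mapsto \Tr\exp(H+\log A)$) and induction on $k$ over the independent summands,
\begin{equation*}
\bbE\,\Tr\exp\!\Bigl(\theta\sum_j \widetilde S_j\Bigr) \;\leq\; \Tr\exp\!\Bigl(\sum_j \log \bbE\, e^{\theta \widetilde{S}_j}\Bigr).
\end{equation*}
This reduces everything to a single-summand matrix log-mgf bound: for a centered Hermitian $\widetilde{S}$ with $\|\widetilde{S}\|\leq L$ and $0<\theta<3/L$,
\begin{equation*}
\log \bbE\, e^{\theta \widetilde{S}} \;\preceq\; \frac{\theta^2/2}{1-L\theta/3}\,\bbE\,\widetilde{S}^2.
\end{equation*}
I would derive this from the scalar inequality $e^x - 1 - x \leq \frac{x^2/2}{1 - |x|/3}$ for $|x|<3$, applied to $\theta\widetilde{S}$ via the spectral theorem, followed by taking expectations (using $\bbE\widetilde{S}=0$) and the operator monotonicity of $\log$ on the PSD cone. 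This matrix log-mgf bound is the main obstacle, since the operator-monotonicity step and Lieb's theorem are the only nontrivial analytic inputs; everything else is manipulation.

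Combining the three ingredients, using $\Tr\exp(A)\leq d\cdot \exp(\lambda_{\max}(A))$ and $\lambda_{\max}\bigl(\sum_j \bbE\widetilde{S}_j^2\bigr)=\nu(M)$, yields
\begin{equation*}
\bbP(\|M\|\geq t)\;\leq\; (m+n)\,\exp\!\left(-\theta t + \frac{\theta^2/2}{1-L\theta/3}\,\nu(M)\right),\qquad 0<\theta<3/L.
\end{equation*}
Optimizing with the choice $\theta = t/(\nu(M)+Lt/3)$ makes $1-L\theta/3 = \nu(M)/(\nu(M)+Lt/3)$, and substitution collapses the exponent to $-t^2/\bigl(2(\nu(M)+Lt/3)\bigr)$, proving part (1). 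For part (2), I would integrate the tail:
\begin{equation*}
\bbE\|M\| \;\leq\; u + \int_u^{\infty}\bbP(\|M\|\geq t)\,dt,
\end{equation*}
split the exponent $-t^2/(2\nu+2Lt/3)$ into the sub-Gaussian regime $t\leq 3\nu/L$ (where it dominates by $-t^2/(4\nu)$) and the sub-exponential regime $t>3\nu/L$ (where it dominates by $-3t/(4L)$), choose $u=\sqrt{2\nu\log d}+\tfrac{1}{3}L\log d$ so that the prefactor $d=m+n$ is absorbed, and use $\sqrt{a+b}\leq \sqrt{a}+\sqrt{b}$ to arrive at the stated bound $\bbE\|M\|\leq \sqrt{2\nu(M)\log(m+n)} + \tfrac{1}{3}L\log(m+n)$.
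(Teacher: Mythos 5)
Your proposal is the standard Ahlswede--Winter/Tropp proof of the matrix Bernstein inequality, and it is essentially correct; note, however, that the paper itself offers no proof of this theorem at all --- it is imported as a known result (it is Tropp's matrix Bernstein inequality), so there is no in-paper argument to compare against. Your part (1) is sound end to end: the Hermitian dilation correctly reduces $\|M\|$ to $\lambda_{\max}(\widetilde M)$ with $\lambda_{\max}\bigl(\sum_j \bbE \widetilde S_j^2\bigr)=\nu(M)$, the Lieb/induction step for subadditivity of the trace mgf is the right tool, the scalar bound $e^x-1-x\leq \frac{x^2/2}{1-|x|/3}$ for $|x|<3$ transfers to the matrix setting by the spectral theorem, and your choice $\theta=t/(\nu(M)+Lt/3)$ does collapse the exponent to $-t^2/(2(\nu(M)+Lt/3))$. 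The one soft spot is part (2): integrating the tail from $u=\sqrt{2\nu\log d}+\tfrac13 L\log d$ does not cleanly recover the stated constants, because at $t=u$ the tail bound from part (1) is not yet below $1$ (the threshold at which $d\exp(-t^2/(2\nu+2Lt/3))=1$ is $t^*=\tfrac{L\log d}{3}+\sqrt{(L\log d/3)^2+2\nu\log d}\geq u$), so the residual integral contributes an additional positive term and you would land at a bound with worse constants. The standard way to get exactly $\sqrt{2\nu\log d}+\tfrac13 L\log d$ is to bypass the tail entirely and bound $\bbE\,\lambda_{\max}(\widetilde M)\leq \inf_{\theta\in(0,3/L)}\theta^{-1}\log\bbE\,\Tr\exp(\theta\widetilde M)\leq \inf_{\theta}\theta^{-1}\bigl[\log d+\tfrac{\theta^2\nu/2}{1-\theta L/3}\bigr]$ and optimize in $\theta$ directly. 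Since the paper only uses this bound up to constants, the discrepancy is harmless for its downstream application, but as a proof of the literal statement your part (2) needs that repair.
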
 
We shall bound $\beta_1$ using part (ii) of Bernstein's inequality, and $\beta_2$ using part (ii) of Bernstein's inequality.  The next two lemma's provide necessary material for bounding $\beta_1,\beta_2$.
\begin{proposition}
\label{prop:L}
Let $\Delta$ be a random mask of size $n\times m$, where a random location is chosen and set to $1$, and rest of the entries are set to 0. Let $X$ be a matrix of size $n\times m$ with entries bounded in absolute value by 1. Define $S=\frac{1}{p}X\circ \Delta-X$. Let $p=\frac{1}{mn}$. Then,
\begin{enumerate}
 \item $||S||\leq ||X||+\frac{1}{p}$
 \item $\bbE S^\top S=\bbE SS^\top=X\circ X-XX^\top$
\end{enumerate}
\end{proposition}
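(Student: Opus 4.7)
The plan is to establish both parts by direct calculation, exploiting the rank-one, single-entry structure of the mask $\Delta$.

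For part (1), I will apply the triangle inequality to $S=\frac{1}{p} X\circ\Delta - X$, reducing the problem to bounding $\|X\circ\Delta\|$. The crucial observation is that $\Delta$ has exactly one nonzero entry, equal to $1$, so $X\circ\Delta$ is a matrix with a single nonzero entry whose absolute value is at most $1$ by the boundedness assumption on $X$. A matrix with only one nonzero entry has spectral norm equal to the absolute value of that entry, so $\|X\circ\Delta\|\le 1$, giving $\|S\|\le \frac{1}{p}+\|X\|$.

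For part (2), I will expand
\begin{equation*}
SS^\top = \frac{1}{p^2}(X\circ\Delta)(X\circ\Delta)^\top - \frac{1}{p}(X\circ\Delta)X^\top - \frac{1}{p}X(X\circ\Delta)^\top + XX^\top
\end{equation*}
and take expectations term by term. Since the random position is uniform over $[n]\times[m]$, we have $\bbE[\Delta_{i,j}]=p$, hence $\bbE[X\circ\Delta]=pX$; this shows the two middle cross-terms each contribute $-XX^\top$, leaving a net $-XX^\top$ after combining with the last term. The remaining work is to compute $\bbE[(X\circ\Delta)(X\circ\Delta)^\top]$. Its $(i,k)$ entry is $\sum_{j} X_{i,j}X_{k,j}\,\bbE[\Delta_{i,j}\Delta_{k,j}]$, and since $\Delta$ has exactly one nonzero entry, $\Delta_{i,j}\Delta_{k,j}$ vanishes unless $(i,j)=(k,j)$, i.e.\ $i=k$. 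Thus the expectation is a diagonal matrix whose $(i,i)$ entry is $p\sum_j X_{i,j}^2$. Multiplying by $1/p^2$ and recognizing the resulting quantity as the intended right-hand side (using the paper's interpretation of $X\circ X$ for the diagonal of row-wise squared sums) yields the claimed identity. The computation of $\bbE S^\top S$ follows by the identical argument with roles of rows and columns swapped.

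The main bookkeeping step is handling the covariance structure $\bbE[\Delta_{i,j}\Delta_{k,\ell}]$ correctly; however, the single-entry nature of $\Delta$ collapses this immediately (it is $p$ if $(i,j)=(k,\ell)$ and $0$ otherwise), so there is no real obstacle. The proposition is primarily a bookkeeping lemma setting up the parameters $L$ and $\nu(M)$ needed to apply Bernstein's inequality (Theorem \ref{thm:bernstein}) in the subsequent bounds on $\beta_1$ and $\beta_2$.
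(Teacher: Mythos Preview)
Your proposal is correct and follows essentially the same route as the paper: triangle inequality plus the single-nonzero-entry observation for part (1), and termwise expansion of $SS^\top$ using $\bbE[X\circ\Delta]=pX$ and $\bbE[\Delta_{i,j}\Delta_{k,\ell}]=p\cdot\ones_{(i,j)=(k,\ell)}$ for part (2). You even flag the dimensional oddity in writing the diagonal matrix $\frac{1}{p^2}\bbE[(X\circ\Delta)(X\circ\Delta)^\top]$ as ``$X\circ X$'', which the paper glosses over; your reading of it as the intended diagonal of row-wise squared sums is the only sensible interpretation here.
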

\begin{proof}
\begin{equation}
||S||=||\frac{1}{p}X\circ \Delta-X||\leq ||X||+||\frac{1}{p}X\circ\Delta||\leqa ||X||+\frac{1}{p}.
\end{equation}
In the above set of inequalities in order to derive (a) we used the fact that $X\circ \Delta$ is an $n\times m$ matrix with a single non-zero entry bounded in absolute value by 1. Hence the spectral norm of this matrix will be bounded by $1$. To derive the second part of the proposition we proceed as follows
\begin{align}
\bbE SS^\top=\bbE (\frac{1}{p}X\circ \Delta-X)(\frac{1}{p}X\circ \Delta-X)^\top=\bbE [\frac{1}{p^2}(X\circ \Delta) (X\circ \Delta)^\top-\frac{1}{p}(X\circ \Delta) X^\top-\frac{1}{p}X(X\circ\Delta)^\top+XX^\top].
\end{align}
Via elementary calculations, it is easy to verify that
\begin{align}
\bbE \left[\frac{1}{p^2} (X\circ \Delta)(X\circ \Delta)^\top\right]=X\circ X\\
\bbE \left[\frac{1}{p}X(X\circ\Delta)^\top\right]=\bbE \left[\frac{1}{p}(X\circ\Delta)X^\top\right]=XX^\top.
\end{align}
These identities allow us to conclude part (ii) of this proposition. 
\end{proof}
We are now ready to bound the quantities $\beta_1,\beta_2$
\begin{lemma}
Let $p=\frac{1}{mn}$. Then,
\label{lem:beta_1}
\begin{equation}
\beta_1=\bbE\left\|\frac{1}{p\siom}X\circ \Deltaom-X\right\|\leq \sqrt{\frac{2\log(m+n)||X\circ X-XX^\top||}{|\Omega|}}+\frac{\log(m+n)(p||X||+1)}{3p\siom}.
\end{equation}
\end{lemma}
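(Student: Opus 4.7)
The plan is to apply the matrix Bernstein inequality (Theorem \ref{thm:bernstein}, part (ii)) directly to the i.i.d. sum defining the error. First I would write out the quantity of interest as a sum of i.i.d. centered matrices: for each $j = 1, \ldots, \siom$, let $\Delta_j$ be the random boolean mask that places a single $1$ at an index drawn uniformly from $[n]\times[m]$, and set
\begin{equation}
S_j \defeq \frac{1}{p} X \circ \Delta_j - X.
\end{equation}
Since the sampling probability of any given index in one draw is $p = 1/(mn)$, we have $\bbE[X \circ \Delta_j] = pX$, so each $S_j$ is centered. Setting $M \defeq \sum_{j=1}^{\siom} S_j$, it follows that $\frac{1}{p\siom} X \circ \Deltaom - X = \frac{1}{\siom} M$, so the target is $\beta_1 = \frac{1}{\siom}\bbE \|M\|$.

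Next I would invoke Proposition \ref{prop:L} to extract the two ingredients that Bernstein needs. The uniform bound is $\|S_j\| \leq \|X\| + 1/p$, so we may take $L = \|X\| + 1/p$. For the matrix variance statistic, because the $S_j$ are i.i.d.\ and centered, $\bbE S_j S_j^\top$ and $\bbE S_j^\top S_j$ each equal the expression identified in part (ii) of Proposition \ref{prop:L}, and the sums in the definition of $\nu(M)$ collapse to $\siom$ copies of the same matrix. Hence
\begin{equation}
\nu(M) = \siom \cdot \|X \circ X - X X^\top\|.
\end{equation}

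Finally I would plug these into the expectation bound in Theorem \ref{thm:bernstein}(ii):
\begin{equation}
\bbE \|M\| \leq \sqrt{2\siom\, \|X \circ X - X X^\top\| \log(m+n)} + \tfrac{1}{3}\bigl(\|X\| + 1/p\bigr)\log(m+n),
\end{equation}
then divide by $\siom$ and rearrange the second term as $\frac{\log(m+n)(p\|X\|+1)}{3p\siom}$ to arrive at the stated inequality. This is a direct application, so I do not anticipate any real obstacle; the only subtle point is verifying that the ``$X\circ X - XX^\top$'' notation from Proposition \ref{prop:L} is what controls $\nu(M)$ (the cross terms $\bbE[S_j S_k^\top]$ vanish by independence of the draws), which the i.i.d.\ with-replacement sampling assumption (M7) makes immediate.
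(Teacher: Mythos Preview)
Your proposal is correct and mirrors the paper's own proof essentially step for step: the paper also writes the error as $\frac{1}{\siom}\bigl\|\sum_{j=1}^{\siom} S_j\bigr\|$ with $S_j=\frac{1}{p}X\circ\Delta_j-X$, invokes Proposition~\ref{prop:L} for the uniform bound $L=\|X\|+1/p$ and the variance $\nu(M)=\siom\,\|X\circ X-XX^\top\|$, and then applies part (ii) of Theorem~\ref{thm:bernstein} before dividing by $\siom$. There is no meaningful difference in approach.
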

\begin{proof}
\begin{align}
\left\|\frac{1}{p\siom}X\circ \Deltaom-X\right\|=\frac{1}{\siom}\left\|\sum_{j=1}^{\siom} \underbrace{(X\circ \Delta_j-X)}_{S_j}\right\|
\end{align}
Here $\Delta_1,\ldots,\Deltaom$ are random i.i.d. boolean masks with each of them having exactly one non-zero, whose location is chosen uniformly at random from $[n]\times [m]$. For this reason the matrices $S_1,\ldots,S_\siom$ are i.i.d. matrices. It is easy to see that $\bbE S_j=0$ for each $j\geq 1$. Applying Bernstein's inequality (Theorem~\eqref{thm:bernstein}) and using Proposition~\eqref{prop:L} to bound the necessary quantities we get that 
\begin{align}
\bbE \left\|\frac{1}{p\siom}X\circ \Deltaom-X\right\|&=\frac{1}{\siom}\left[\sqrt{2\log(m+n)\siom||X\circ X-XX^\top||}+\frac{\log(m+n)}{3}(||X||+\frac{1}{p})\right]\\
&= \sqrt{\frac{2\log(m+n)||X\circ X-XX^\top||}{|\Omega|}}+\frac{\log(m+n)(p||X||+1)}{3p\siom}
\end{align}
\end{proof}
Next we bound $\beta_2$.
\begin{lemma}
\label{lem:beta_2}
Let $p=\frac{1}{mn}$. Then,
\begin{equation}
\beta_2=\bbE\left\|\frac{1}{p\siom}X\circ \Deltaom-X\right\|^2\leq 1+\left(\frac{20mn\log(n)}{3\siom}\right)^2+\frac{10\log(n)}{\siom}||X\circ X-XX^\top||.
\end{equation}
\end{lemma}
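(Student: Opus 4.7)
\begin{proofsketch}
The plan is to mimic the proof of Lemma~\ref{lem:beta_1} but convert the tail form of Bernstein's inequality (Theorem~\ref{thm:bernstein}, part (i)) into a second-moment bound via the layer-cake identity $\bbE Z^2 = \int_0^\infty 2t\,\bbP(Z>t)\,dt$ applied to $Z=\|M\|$, where $M = \sum_{j=1}^{\siom}S_j$ with $S_j = X\circ\Delta_j - X$.

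First, I would record the two ingredients from Proposition~\ref{prop:L}: the uniform bound $\|S_j\|\leq \|X\|+1/p$, which under the assumption $|X|_{\infty}\leq 1$ and $p=1/(mn)$ gives $\|S_j\|\leq \sqrt{mn}+mn\leq 2mn\defeq L$, and the matrix variance $\nu(M) = \siom\,\|X\circ X - XX^\top\|$ (the two choices in the definition of $\nu$ coincide here since $\bbE S_j S_j^\top = \bbE S_j^\top S_j$). Bernstein then reads
\begin{equation*}
\bbP(\|M\|\geq t)\leq (m+n)\exp\!\left(\frac{-t^2/2}{\siom\|X\circ X - XX^\top\|+Lt/3}\right)\leq (m+n)\Bigl(e^{-t^2/(4\siom\|X\circ X-XX^\top\|)}+e^{-3t/(4L)}\Bigr),
\end{equation*}
using $a+b\leq 2\max(a,b)$ in the exponent.

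Next, I would split $\bbE\|M\|^2 = 2\int_0^{t_0} t\,\bbP(\|M\|>t)\,dt + 2\int_{t_0}^{\infty} t\,\bbP(\|M\|>t)\,dt$ at the threshold $t_0=\siom$. On $[0,t_0]$ the trivial bound $\bbP(\|M\|>t)\leq 1$ contributes $t_0^2 = \siom^2$, which after dividing by $\siom^2$ yields the leading constant $1$ in the claimed bound. On $[t_0,\infty)$ I would plug in the two-exponential tail bound, absorb the $(m+n)$ factor into the exponent via $\log(m+n)\leq C\log(n)$ (since $n\geq m$), and compute the two Gaussian/sub-exponential integrals in closed form: the Gaussian piece produces a contribution of order $\siom\log(n)\|X\circ X - XX^\top\|$, and the exponential piece produces a contribution of order $L^2\log^2(n) = (2mn)^2\log^2(n)$. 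Dividing the whole thing by $\siom^2$ yields the three terms displayed in the lemma, with the explicit constants $10$ and $20/3$ obtained by tracking the numerical factors coming out of Bernstein.

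The main obstacle is purely bookkeeping: the two tails must be integrated without picking up parasitic $\siom$ factors, and the constants must be tuned so that the final bound matches $1 + (20mn\log n/(3\siom))^2 + 10\log(n)\|X\circ X-XX^\top\|/\siom$ rather than a looser expression. Choosing the threshold at exactly $t_0 = \siom$ (so that the ``small deviations'' contribution contributes the $1$) and using $\log(m+n)\leq \log(n)$ up to a multiplicative constant should be the correct normalization.
\end{proofsketch}
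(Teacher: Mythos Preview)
Your layer-cake approach is a viable route, but it is \emph{not} what the paper does. The paper's argument is considerably simpler: it fixes a single confidence level $\delta$, invokes Bernstein once to obtain a high-probability bound
\[
\|A-X\|\leq \frac{2\log((m+n)/\delta)}{3\siom}\Bigl(\|X\|+\tfrac{1}{p}\Bigr)+\sqrt{\frac{2\log((m+n)/\delta)\|X\circ X-XX^\top\|}{\siom}}
\]
holding with probability at least $1-\delta$, pairs this with the deterministic worst-case bound $\|A-X\|\leq 1/p+\|X\|$, and writes
\[
\bbE\|A-X\|^2\leq (1-\delta)(\text{high-prob bound})^2+\delta\,(1/p+\|X\|)^2.
\]
The choice $\delta=1/(mn+\sqrt{mn})^2$ is made precisely so that the second summand equals $1$; expanding the first summand via $(a+b)^2\leq 2a^2+2b^2$ and using $\log((m+n)/\delta)\leq C\log n$ then delivers the remaining two terms with the stated constants. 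No tail integration is needed.

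Your approach would also get there, but with more work than you advertise. The threshold $t_0=\siom$ does produce the ``$1$'' correctly after dividing by $\siom^2$. However, on $[t_0,\infty)$ the prefactor $(m+n)$ does not simply ``absorb into the exponent'' as a $\log n$ multiplier under a naive Gaussian/exponential integral: $\int_{t_0}^\infty 2t(m+n)e^{-t^2/(4\nu)}\,dt=4\nu(m+n)e^{-t_0^2/(4\nu)}$, and whether this matches $C\siom\log(n)\|X\circ X-XX^\top\|$ depends on the relative sizes of $\siom$ and $\nu$. To make the constants come out you would really need to split at a threshold of order $\sqrt{\nu\log(m+n)}+L\log(m+n)$ (the Bernstein quantile) rather than at $\siom$, which is exactly what the paper's single-$\delta$ trick does implicitly and more cleanly.
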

\begin{proof}
Using part (i) of Bernstein's inequality we get that, for any $\delta>0$, with probability at least $1-\delta$, 
\begin{equation}
\label{eqn:this12}
\left\|A-X\right\|\leq \frac{2\log\left(\frac{m+n}{\delta}\right)}{3\siom}\left(||X||+\frac{1}{p}\right)+\sqrt{\frac{2\log(\frac{m+n}{\delta})||X\circ X-XX^\top||}{\siom}}.
\end{equation}
Worst case upper bound on $\|A-X\|$ can be derived as follows
\begin{align}
\label{eqn:this13}
\|A-X\|&=||\frac{1}{p\siom}X\circ \Deltaom-X||\\
&\leq \frac{1}{p\siom}||X\circ \Deltaom||+||X||\\
&\leq \frac{1}{p\siom}\sum_{j=1}^{\siom}||X\circ \Delta_{j}||+||X||\\
&\leq \frac{1}{p}+||X||.
\end{align}
Using equations~\eqref{eqn:this12} and ~\eqref{eqn:this13} we get
\begin{equation}
\bbE||A-X||^2\leq (1-\delta)\left(\frac{2\log\left(\frac{m+n}{\delta}\right)}{3\siom}\left(||X||+\frac{1}{p}\right)+\sqrt{\frac{2\log(\frac{m+n}{\delta})||X\circ X-XX^\top||}{\siom}}\right)^2+\delta\left(\frac{1}{p}+||X||\right)^2
\end{equation}
Since each element of $X$ is bounded by $1$ in magnitude, we get that $||X||\leq \sqrt{mn}$. Now, replace $p=\frac{1}{mn}$ and choose $\delta=\frac{1}{\left(mn+\sqrt{mn}\right)^2}$. Using the inequality $(a+b)^2< 2a^2+2b^2$ and over-approximating we get the desired result.
\end{proof}

\textbf{Final bound on $\epsilon_2$.} We are now ready to establish a bound on $\epsilon_2$. In the next bound we shall no longer keep track of explicit constants. Instead in the following calculations we shall use a universal constant $C>0$ whose value can change from one line to another.
\begin{lemma}
\label{lem:eps_2}
Let $\mu_1=\bbE ||N||,\mu_2=\bbE ||N||^2$. Then, for some universal constant $C>0$ we have
\begin{equation}
\begin{split}
\epsilon_2\leq O\left(\frac{r}{m\sqrt{n}}(||\Mstar-\Zstar||+\mu_1)+\frac{r||\Mstar-\Zstar||^2}{mn}+\frac{r\mu_2}{mn}+\frac{r}{m}+\frac{rmn\log^2(n)}{\siom^2}\right)
\end{split}
\end{equation}
\end{lemma}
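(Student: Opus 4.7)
The strategy is to start from the pivot inequality~\eqref{eqn:pivot} already derived in the excerpt,
$$\bbE\|\Zhat - \Zstar\|_F^2 \leq r\|\Zstar\|\bigl(2\bbE\|X - \Zstar\| + \|\Zstar\|\bigr) + 8r\bigl(8\bbE\|X - \Zstar\|^2 + \|\Zstar\|^2\bigr) + r(2\beta_1 + 64\beta_2),$$
divide both sides by $mn$ to convert the left-hand side into $\epsilon_2$, and then substitute in (i) Assumption~A1, (ii) the noise decomposition $X = \Mstar + N$, and (iii) the bounds on $\beta_1,\beta_2$ from Lemmas~\ref{lem:beta_1} and~\ref{lem:beta_2} specialized to $p = 1/(mn)$.

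First, Assumption~A1 gives $\|\Zstar\| = O(\sqrt{n})$, so the two deterministic $\|\Zstar\|^2/(mn)$ contributions collapse to $O(r/m)$. Next, the triangle inequality gives $\|X - \Zstar\| \leq \|N\| + \|\Mstar - \Zstar\|$, which after taking expectations yields $\bbE\|X - \Zstar\| \leq \mu_1 + \|\Mstar - \Zstar\|$, and via $(a+b)^2 \leq 2a^2 + 2b^2$, $\bbE\|X - \Zstar\|^2 \leq 2\mu_2 + 2\|\Mstar - \Zstar\|^2$. Plugging these into the corresponding terms of~\eqref{eqn:pivot} produces exactly $\frac{r}{m\sqrt{n}}(\|\Mstar - \Zstar\| + \mu_1)$, $\frac{r\|\Mstar - \Zstar\|^2}{mn}$, and $\frac{r\mu_2}{mn}$ in the target bound.

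For the remaining $\frac{r}{mn}(2\beta_1 + 64\beta_2)$ contribution, I would use $|X_{ij}|\leq 1$ to obtain the deterministic bounds $\|X\|\leq\sqrt{mn}$ and $\|X\circ X - XX^\top\|\leq \|X\circ X\| + \|X\|^2 = O(mn)$, and then invoke Lemmas~\ref{lem:beta_1} and~\ref{lem:beta_2} with $p = 1/(mn)$ to obtain
$$\beta_1 = O\!\left(\sqrt{\frac{mn\log n}{\siom}} + \frac{mn\log n}{\siom}\right), \qquad \beta_2 = O\!\left(1 + \frac{m^2n^2\log^2 n}{\siom^2} + \frac{mn\log n}{\siom}\right).$$
Multiplying by $r/(mn)$, the leading contribution from $\beta_2$ is $\frac{rmn\log^2 n}{\siom^2}$, matching the last term in the target bound.

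The main obstacle is the routine but fiddly bookkeeping of the subdominant pieces that fall out of the $\beta_1,\beta_2$ bounds — notably $\frac{r}{mn}\sqrt{mn\log n/\siom}$ from $\beta_1$, the stray $\frac{r}{mn}$ from the ``$1$'' in $\beta_2$, and $\frac{r\log n}{\siom}$ from the linear piece of $\beta_2$ — and verifying that each can be absorbed into either the $\frac{r}{m}$ term or the $\frac{rmn\log^2 n}{\siom^2}$ term. This follows by a simple case split on whether $\siom \leq mn\log n$: in that regime the polynomial-in-$1/\siom$ piece of $\beta_2$ already dominates these leftovers, while in the complementary regime $\siom > mn\log n$, they are all bounded by constants and swallowed by $r/m$. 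Once this accounting is completed, the stated bound on $\epsilon_2$ follows by summation of the collected terms.
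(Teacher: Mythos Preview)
Your proposal is correct and follows essentially the same route as the paper: start from the pivot inequality~\eqref{eqn:pivot}, use Assumption~A1 and the decomposition $X=\Mstar+N$ for the $\|X-\Zstar\|$ terms, and plug in Lemmas~\ref{lem:beta_1} and~\ref{lem:beta_2} for $\beta_1,\beta_2$. The only cosmetic differences are that the paper expands $\bbE\|X\circ X-XX^\top\|$ in terms of $\Mstar,\mu_1,\mu_2$ before replacing those by worst-case values, whereas you go straight to the deterministic $O(mn)$ bound, and that your explicit case split on $\siom\lessgtr mn\log n$ makes precise what the paper glosses over when it ``selectively'' drops lower-order terms.
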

\begin{proof}
From Equation~\eqref{eqn:pivot} we have
\begin{align}
\epsilon_2\leq r||\Zstar||(2\bbE ||X-\Zstar||+||\Zstar||)+8r(8\bbE ||X-\Zstar||^2+||\Zstar||^2)+r(2\beta_1+64\beta_2). 
\end{align}
Now, using Lemma~\eqref{lem:beta_1} and ~\eqref{lem:beta_2} to bound $\beta_1,\beta_2$, we get 
\begin{equation}
\label{eqn:final_eps2}
\begin{split}
\epsilon_2\leq \frac{Cr}{mn}\bbE\Bigl[||\Zstar||~||X-\Zstar||+ ||X-\Zstar||^2+||\Zstar||^2+ \sqrt{\frac{\log(n)||X\circ X-XX^\top||}{\siom}}+\\
\frac{\log(n)}{3\siom}(||X||+mn)+1+\frac{m^2n^2\log^2(n)}{\siom^2}+\frac{\log(n)}{\siom}||X\circ X-XX^\top||\Bigr].
\end{split}
\end{equation}
In the above expectation the expectation is being taken w.r.t. the randomness in $X$ due to additive noise of our model. We shall now compute the remaining expectations. For notational convenience, define $\mu_1=\bbE ||N||$, and $\mu_2=\bbE ||N||^2$. Using the fact that $X=\Mstar+N$, we get
\begin{align}
\label{eqn:hadam}
\bbE \|X\circ X-XX^\top\|&\leq \bbE ||X\circ X\|+\bbE ||XX^\top||\\
&\leqa \bbE ||X||^2+\bbE ||(\Mstar+N)(\Mstar+N)^\top||\\
&\leqb \bbE [||\Mstar||^2+||N||^2+2||\Mstar|| ~||N||+ ||\Mstar(\Mstar)^\top||+ \Mstar N^\top+ N(\Mstar)^\top+ NN^\top]\\
&=  2||\Mstar||^2+2\mu_2+4||\Mstar||\mu_1
\end{align}
where to obtain inequality (a) we used the fact that $||A\circ B||\leq ||A|| ||B||$~\citep[Problem 1.6.13, page 23]{bhatia1997matrix}. To obtain inequality (b) we used sub-additivity of norms, and the fact that spectral norm is sub-multiplicative.
By Jensen's inequality we get 
\begin{equation}
\label{eqn:sqrt_hadam}
\bbE \sqrt{\|X\circ X-XX^\top\|}\leq \sqrt{\bbE \|X\circ X-XX^\top\|}\leq \sqrt{2||\Mstar||^2+2\mu_2+4\mu_1||\Mstar||}
\end{equation}
Finally using the sub-additivity of norms we get that 
\begin{align}
\label{eqn:this14}
\bbE||X-\Zstar||^2&=\bbE||\Mstar+N-\Zstar||^2\leq 2||\Mstar-\Zstar||^2+2\bbE ||N||^2=2||\Mstar-\Zstar||^2+2\mu_2\\
\bbE||X-\Zstar||& =\bbE||\Mstar+N-\Zstar||\leq \bbE ||\Mstar-\Zstar||+\bbE ||N||=\bbE ||\Mstar-\Zstar||+\mu_1
\end{align}
Now, putting together Equations~\eqref{eqn:final_eps2},~\eqref{eqn:hadam},~\eqref{eqn:sqrt_hadam},~\eqref{eqn:this14}, and substituting the worst case bound $||\Mstar||=C\sqrt{mn}$, we get
\begin{equation}
\begin{split}
\epsilon_2&\leq C\Bigl[\frac{r}{mn}||\Zstar||\left(||\Mstar-\Zstar||+\mu_1\right)+\frac{r}{mn}||\Mstar-\Zstar||^2+\frac{r}{mn}(\mu_2+||\Zstar||^2)+
\\
&\quad \frac{r}{mn}\sqrt{\frac{\log(n)}{\siom}\left(mn+\mu_1\sqrt{mn}+\mu_2\right)}+\frac{rmn\log^2(n)}{\siom^2}+\frac{r\log(n)}{mn\siom}(mn+\mu_2+\mu_1\sqrt{mn})\Bigr].
\end{split}
\end{equation}
We can further simplify the above expression, by noting that the entries of $N$  are bounded by $1$, and hence $\mu_1=O(\sqrt{mn}),\mu_2=O(mn)$. Note that in reality $\mu_1,\mu_2$ are much smaller, and one could lose a lot of information by considering their worst case values. However, in order to simplify the above bound for $\epsilon_2$ and make it interpretable, we shall selectively replace $\mu_1,\mu_2$ by $\sqrt{mn},mn$ respectively, This allows us to gauge which terms are lower order terms and drop them. This gets us
\begin{equation}
\epsilon_2\leq O\left(\frac{r}{m\sqrt{n}}(||\Mstar-\Zstar||+\mu_1)+\frac{r||\Mstar-\Zstar||^2}{mn}+\frac{r\mu_2}{mn}+\frac{r}{m}+\frac{rmn\log^2(n)}{\siom^2}\right)
\end{equation}
\end{proof}
\section{Proof of Theorem~\eqref{thm:main}}
From Lemma~\eqref{lem:eps_delta} we have 
\begin{equation*} 
\MSE(\Mhat)\leq O\left(\frac{\sqrt{mn\log(n)}}{\siom}+\sqrt{\frac{n}{\siom}}+\frac{mn}{\siom^{3/2}}+\frac{\sqrt{mn}}{\siom}+\epsilon_2+\sqrt{\epsilon_2}\right)
\end{equation*}
 From Lemma~\eqref{lem:eps_2} we have 
 \begin{equation}
\begin{split}
\epsilon_2\leq O\left(\frac{r}{m\sqrt{n}}(||\Mstar-\Zstar||+\mu_1)+\frac{r||\Mstar-\Zstar||^2}{mn}+\frac{r\mu_2}{mn}+\frac{r}{m}+\frac{rmn\log^2(n)}{\siom^2}\right)
\end{split}
\end{equation}
Putting the above two equations together we get 
\begin{equation}
\begin{split}
\MSE(\Mhat)&=O\Bigl(\sqrt{\frac{r}{m}}+\frac{\sqrt{mn\log(n)}}{\siom}+\frac{mn}{\siom^{3/2}}+\frac{\sqrt{mn}}{\siom}+\sqrt{\frac{r}{m\sqrt{n}}\left(\mu_1+\frac{\mu_2}{\sqrt{n}}\right)}+\\
&\qquad \sqrt{\frac{r\alpha}{m\sqrt{n}}\left(1+\frac{\alpha}{\sqrt{n}}\right)}+
 \sqrt{\frac{rmn\log^2(n)}{\siom^2}}\Bigr)
\end{split}
\end{equation}
\section {Source for datasets}
Here is where one can download the real world datasets on which all of our experiments were performed.
\begin{enumerate}
\item Paper recommendation dataset:\url{http://www.comp.nus.edu.sg/~sugiyama/SchPaperRecData.html}.
\item Jester dataset: \url{http://goldberg.berkeley.edu/jester-data/}.
\item Movie lens dataset: ~\url{http://grouplens.org/datasets/movielens/}
\item Cameraman dataset:~\url{http://www.utdallas.edu/~cxc123730/mh_bcs_spl.html}
\end{enumerate}
\section{RMSE plots with iterations}
In Figure~\eqref{fig:suppl} we show how the RMSE of $\mmc-$c algorithm changes with iterations. These plots were made on the synthetic datasets that were used in our experiments. The value of $p$ was set to $0.35$. As one can see, on an average, there is a decreasing trend in the RMSE. This decrease is almost linear for small values of $c$ and sub-linear for larger values of $c$.
\begin{figure}[h!]
\centering
    \includegraphics[height=3.0in]{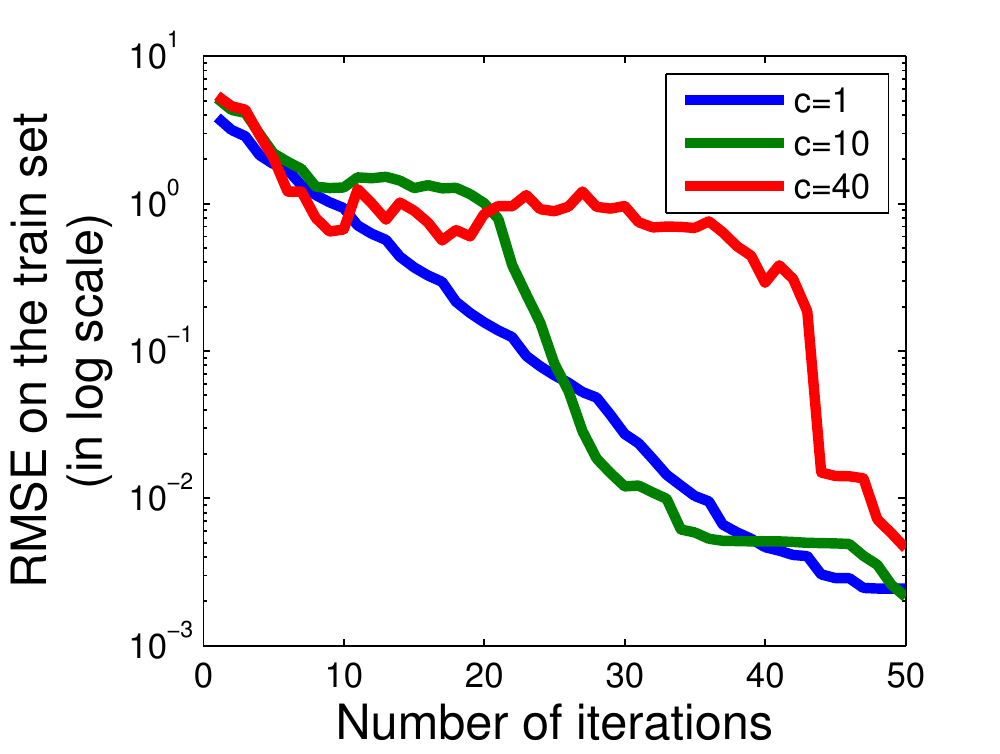}
\caption{RMSE of MMC-c with increasing iterations. \label{fig:suppl}}
\end{figure}

\end{document}